\documentclass{article}

\usepackage[preprint]{neurips_2026}

\usepackage[utf8]{inputenc} 
\usepackage[T1]{fontenc}    
\usepackage{hyperref}       
\usepackage{url}            
\usepackage{booktabs}       
\usepackage{amsfonts}       
\usepackage{nicefrac}       
\usepackage{microtype}      
\usepackage{xcolor}         

\usepackage{amsmath, amssymb, amsthm}
\newtheorem{corollary}{Corollary}
\usepackage{algorithm}
\usepackage{algorithmic}
\usepackage{graphicx}
\usepackage{subcaption}

\title{Temper and Tilt Lead to SLOP: Reward Hacking Mitigation with Inference-Time Alignment}

\author{
Ye Wang \& Jing Liu \& Toshiaki Koike-Akino \\
Mitsubishi Electric Research Laboratories (MERL) \\
201 Broadway, Cambridge, MA 02139, USA \\
\texttt{\{yewang,jiliu,koike\}@merl.com}
}

\begin{document}

\maketitle

\begin{abstract}
Inference-time alignment techniques offer a lightweight alternative or complement to costly reinforcement learning, while enabling continual adaptation as alignment objectives and reward targets evolve. Existing theoretical analyses justify these methods as approximations to sampling from distributions optimally tilted toward a given reward model. We extend these techniques by introducing reference-model temperature adjustment, which leads to further generalization of inference-time alignment to ensembles of generative reward models combined as a sharpened logarithmic opinion pool (SLOP). To mitigate reward hacking, we propose an algorithm for calibrating SLOP weight parameters and experimentally demonstrate that it improves robustness while preserving alignment performance.
\end{abstract}

\section{Introduction}

Alignment through post-training with reinforcement learning (RL) is a widely used and crucial technique for realizing effective foundation models~\citep{christiano2017deep, ziegler2019fine, ouyang2022training, korbak2022reinforcement, bai2022training, rafailov2023direct, shao2024deepseekmath}.
Inference-time alignment methods, such as Best-of-N (BoN)~\citep{stiennon2020learning, nakano2021webgpt}, provide a lightweight alternative or complement to computationally expensive RL methods, while achieving competitive performance~\citep{gao2023scaling, dubois2023alpacafarm}.
Further, inference-time methods enable flexible, continual adaptation to evolving alignment objectives.

Common across many RL approaches is the optimization of the policy $\pi$ to maximize expected reward, subject to Kullback--Leibler (KL) regularization with respect to a reference policy $p$, i.e., $\max_\pi \mathbb{E}_\pi[r] - \lambda \mathrm{KL}(\pi \| p)$.
\citet{korbak2022rl} observed that, in principle, the optimal policy is a tilted distribution in the form $\pi^* \propto p \exp(r / \lambda)$, which is also known from information projection problems in classical information theory literature~\citep{kullback1966note, csiszar1975divergence}.
This perspective has been a key to a line of research~\citep{yang2024asymptotics, mroueh2025information, verdun2025soft, khalaf2025inference} that has considered BoN and other related inference-time methods, while providing theoretical analysis of how they approximate sampling the optimal tilted distribution policy.

Our work extends upon these inference-time alignment methods, by introducing reference model temperature adjustment, which is closely related to a proposal of~\citet{jinnai2025regularized}, and further generalizing this framework to ensembles of generative reward models as a form of logarithmic opinion pool (LOP)~\citep{heskes1997selecting}.
This perspective, while bridging RL objectives, sampling-based methods, and ensemble modeling, also reveals the impact of correlation and the benefits of diversity, when combining models.
We experimentally demonstrate the effectiveness of calibrating the ensemble weights to mitigate reward hacking, while preserving alignment performance, for visual question answering and math reasoning tasks.

\section{Formulation, background, and methods}

\subsection{Tempered reward maximization}

We consider a generalization of the KL-regularized reward maximization problem, given by
\begin{equation}
\label{eq:tempered_reward}
\pi_{\alpha, \beta}^*(y \mid x) =
\arg \max_\pi \mathbb{E}_{Y \sim \pi(\cdot \mid x)}
  \big[ \beta r(x,Y) + (\alpha - 1) \log p(Y \mid x) \big]
  - \mathrm{KL}\big( \pi(y \mid x) \| p(y \mid x) \big),
\end{equation}
where $r: \mathcal{X} \times \mathcal Y \to \mathbb{R}$ is a given (proxy) reward model, $p(y \mid x)$ is a given reference model, and parameters $\alpha, \beta \in \mathbb{R}$ control the regularization.
Typically, $\beta > 0$ is used, but we allow for $\beta \leq 0$ to consider situations where the proxy reward model may be severely misaligned with the ideal reward.

The standard KL-regularized reward maximization problem is the special case of~\eqref{eq:tempered_reward} with $\alpha = 1$, where the reference model log-likelihood term disappears from the expectation, and the optimal solution is shown by~\citet{korbak2022rl} to be
\begin{equation}
\label{eq:tilted}
\pi_{1, \beta}^*(y \mid x) = 
\frac{p(y \mid x) \exp(\beta r(x, y))}{C_{\beta, x}},
\end{equation}
where $C_{\beta, x} := \int p(y \mid x) \exp(\beta r(x,y)) \, dy$ is a normalizing constant.\footnote{For discrete $\mathcal{Y}$, the normalizing constants throughout the paper are instead defined via summation.}
The \emph{tilted distribution} solution of~\eqref{eq:tilted} is also known in classical information theory literature~\citep{kullback1966note, csiszar1975divergence}.
Note that as $\beta \to \infty$, the optimal policy is plain reward maximization over $y$ in the support of the reference model, while for $\beta = 0$, the optimum is just the reference model, i.e., $\pi_{1, 0}^* = p$, due to regularization.
Sweeping the value of $\beta$, these tilted distributions achieve the optimal reward maximization versus KL divergence (from the reference model) trade-off, which is the target for many RL training methods.
Characterizing the general problem of~\eqref{eq:tempered_reward} follows as a corollary.

\begin{corollary}
\label{co:temper_tilt}
The problem given above in~\eqref{eq:tempered_reward} is equivalently written and solved as
\begin{align}
\label{eq:alt_tempered_reward}
\pi_{\alpha, \beta}^*(y \mid x) &=
\arg \max_\pi \mathbb{E}_{Y \sim \pi(y \mid x)}
  \big[ \beta r(x,Y) \big]
  - \mathrm{KL}\big( \pi(y \mid x) \| p(y \mid x)^\alpha / C_{\alpha, x} \big) \\
&= \frac{p(y \mid x)^\alpha \exp(\beta r(x, y))}{C_{\alpha, \beta, x}}, \label{eq:temper_tilted}
\end{align}
where $C_{\alpha, x} := \int p(y \mid x)^\alpha \, dy$ and $C_{\alpha, \beta, x} := \int p(y \mid x)^\alpha \exp(\beta r(x, y)) \, dy$ are normalizing constants.
\end{corollary}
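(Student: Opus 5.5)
The plan is to reduce the generalized problem~\eqref{eq:tempered_reward} to the standard KL-regularized problem solved by~\eqref{eq:tilted}, with the reference model $p$ replaced by the tempered reference $q_\alpha(y \mid x) := p(y \mid x)^\alpha / C_{\alpha, x}$. Throughout I will assume $C_{\alpha,x}$ and $C_{\alpha,\beta,x}$ are finite and positive, and restrict to policies $\pi$ with $\mathrm{KL}(\pi \| p) < \infty$, so that all expectations are well defined. This is implicitly required for the statement to make sense.

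\textbf{Step 1: rewriting the objective.} Expand $\mathrm{KL}(\pi \| p) = \mathbb{E}_\pi[\log \pi(Y \mid x)] - \mathbb{E}_\pi[\log p(Y \mid x)]$. The objective in~\eqref{eq:tempered_reward} then becomes
\begin{equation*}
\mathbb{E}_\pi\big[\beta r(x,Y)\big] + \alpha\, \mathbb{E}_\pi\big[\log p(Y \mid x)\big] - \mathbb{E}_\pi\big[\log \pi(Y \mid x)\big].
\end{equation*}
Since $\alpha \log p = \log q_\alpha + \log C_{\alpha,x}$, the last two terms equal $-\mathrm{KL}(\pi \| q_\alpha) + \log C_{\alpha,x}$. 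The additive constant $\log C_{\alpha,x}$ does not depend on $\pi$, so it does not affect the $\arg\max$. This gives the first equality~\eqref{eq:alt_tempered_reward}.

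\textbf{Step 2: solving the rewritten problem.} The problem~\eqref{eq:alt_tempered_reward} is exactly the $\alpha=1$ problem with reference $q_\alpha$ in place of $p$. By the result of~\citet{korbak2022rl} quoted in~\eqref{eq:tilted}, its maximizer is
\begin{equation*}
\frac{q_\alpha(y \mid x)\, e^{\beta r(x,y)}}{\int q_\alpha(y' \mid x)\, e^{\beta r(x,y')}\, dy'}.
\end{equation*}
The factor $1/C_{\alpha,x}$ cancels between numerator and denominator, which yields~\eqref{eq:temper_tilted} with normalizer $C_{\alpha,\beta,x}$.

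For self-containment I would also note the direct argument. With $\pi^*$ denoting~\eqref{eq:temper_tilted}, the objective equals $\log C_{\alpha,\beta,x} - \mathrm{KL}(\pi \| \pi^*)$. This follows by substituting $\beta r + \alpha\log p = \log \pi^* + \log C_{\alpha,\beta,x}$ into the expression from Step 1. Gibbs' inequality then shows that $\pi^*$ is the unique maximizer.

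\textbf{Main obstacle.} The algebra is routine; the delicate point is well-posedness. For $\alpha > 1$ or $\alpha < 1$, $p^\alpha$ may fail to be integrable (for example $\alpha < 1$ on continuous or infinite $\mathcal{Y}$), or $\mathbb{E}_\pi[\log p]$ may be infinite. In such cases the rearrangement in Step 1 manipulates $\infty - \infty$. I would handle this by stating the finiteness assumptions on the normalizing constants explicitly and restricting the feasible set as above. Under these assumptions each term is finite and the Gibbs-inequality argument goes through verbatim.
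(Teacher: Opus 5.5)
Your proposal is correct and follows the paper's proof essentially step for step. The paper likewise completes $\alpha\log p$ into $\log\bigl(p^\alpha/C_{\alpha,x}\bigr)$ to obtain \eqref{eq:alt_tempered_reward}, then recombines with $\beta r$ and $C_{\alpha,\beta,x}$ so that the objective becomes $\log C_{\alpha,\beta,x}-\mathrm{KL}(\pi\,\|\,\pi^*_{\alpha,\beta})$, which is your ``direct argument.''

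One caution on your well-posedness aside, which goes beyond the paper. Restricting to $\mathrm{KL}(\pi\|p)<\infty$ can exclude $\pi^*_{\alpha,\beta}$ itself, for example with $\alpha=1/2$, $r\equiv 0$, $p(k)\propto 1/(k^2\log^4 k)$ on $k\geq 2$. Also, finiteness of each term can fail for $\alpha>1$. It is cleaner to treat $\beta r+\alpha\log p-\log\pi$ as a single integrand, whose $\pi$-expectation equals $\log C_{\alpha,\beta,x}-\mathrm{KL}(\pi\,\|\,\pi^*_{\alpha,\beta})$ in the extended reals.
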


Corollary~\ref{co:temper_tilt} states that augmenting the reward with an additional reference model log-likelihood term in~\eqref{eq:tempered_reward} is equivalent to adjusting the temperature of the reference model in~\eqref{eq:alt_tempered_reward}.
These equivalent problems are both solved by~\eqref{eq:temper_tilted}, which we call the \emph{tempered and tilted} distribution, as $\alpha$ is essentially an inverse temperature applied to the reference model, while $\beta$ controls the exponential tilting towards the reward model.
Note that as $\alpha \to \infty$, with a fixed, finite $\beta$, the solution approaches the maximum likelihood output of the reference model, whereas $\alpha = 0$ makes the reference model uniform and removes its impact, and values of $\alpha < 0$ inverts the reference model, amplifying less likely outputs.

\subsection{Approximate inference-time alignment techniques}

While the tilted (and tempered) distributions of~\eqref{eq:tilted} and~\eqref{eq:temper_tilted}, in principle, express closed-form solutions for the regularized reward maximization problems given by~\eqref{eq:tempered_reward} and~\eqref{eq:alt_tempered_reward}, handling these formulations directly is often computationally intractable.
Even with finite sets, where the integration for determining the normalizing constants are replaced with summation, these calculations become infeasible due to the enormous size of the output space (e.g., all token sequences up to a max length).
Hence, this subsection reviews some practical inference-time methods that approximate sampling the tilted distribution of~\eqref{eq:tilted}, i.e., for the special case of $\alpha = 1$, while requiring only the ability to draw independent samples of the reference model and to evaluate the reward model on those samples.

Best-of-N (BoN) is a widely adopted heuristic (see~\citep{stiennon2020learning} and~\citep{nakano2021webgpt} for early influential examples) to augment or replace RL training.
It simply consists of generating $n$ independent samples, $y_1, \ldots, y_n$, from the reference model $p(y \mid x)$, evaluating each with the reward model, and outputting the sample that maximizes the reward, i.e., $\arg \max_{y_i} r(x, y_i)$.
Under certain simplifying assumptions, \citet{yang2024asymptotics} established that BoN asymptotically approaches the optimal tilted distribution.
However, in more general finite regimes, which are characterized by \citet{mroueh2025information}, BoN does not necessarily approximate the tilted distribution.
Best-of-Poisson (BoP)~\citep{khalaf2025inference} is an extension of BoN, where, instead of a fixed $n$, the number of samples is drawn from a Poisson distribution, which results in a close approximation of the tilted distribution, under the assumption of uniformly distributed rewards.

Soft Best-of-N (SBoN)~\citep{verdun2025soft} similarly generates $n$ independent samples and evaluates the reward, $r_i = r(x, y_i)$, for each sample.
However, instead of selecting the maximizer, SBoN randomly selects sample $y_i$ with probability $\exp(\beta r_i) / \sum_j \exp(\beta r_j)$, i.e., the distribution over the $n$ samples formed by the softmax of their rewards, with inverse temperature $\beta$.
\citet{verdun2025soft} characterize the accuracy of SBoN, which includes establishing that $\textrm{KL}(\pi_{1, \beta}^* \| \tilde{\pi}_{\beta, n}) = O(1/n)$,
where $\tilde{\pi}_{\beta, n}$ denotes the distribution sampled by SBoN.
Reward Augmented Decoding (RAD)~\citep{deng2023reward} employs a similar concept, but, instead of selecting among entire response candidates, RAD utilizes the reward model to augment the sampling of each token, within the iterative process of auto-regressive generation.

\citet{jinnai2025regularized} propose methods to augment the reward model for regularizing BoN to mitigate reward-hacking.
Their main proposal, inspired by minimum Bayes risk decoding, involves a proximity regularizer that aims to minimize the Wasserstein distance with respect to the reference policy.
As an ablation, \citet{jinnai2025regularized} also consider what they call KL-regularized BoN, which essentially augments the reward with the reference model log-likelihood, and is analogous to the $(\alpha - 1) \log p(y \mid x)$ term in the tempered reward maximization of~\eqref{eq:tempered_reward}.

\subsection{SLOP: sharpened logarithmic opinion pooling}

The proxy reward model $r$ implicitly defines the \emph{alignment distribution}, given by
\begin{align}
q(y \mid x) := \exp(r(x, y)) / R_x,
\end{align}
where $R_x := \int \exp(r(x, y)) \, dy$ is a normalizing constant.
The tempered and tilted distribution, given in~\eqref{eq:temper_tilted}, can be rewritten as
\begin{align}
\pi_{\alpha, \beta}^*(y \mid x)
&= \frac{p(y \mid x)^\alpha q(y \mid x)^\beta}{\int p(z \mid x)^\alpha q(z \mid x)^\beta \, dz} \\
&=: \mathrm{softmax}\big( \alpha \log p(y \mid x) + \beta \log q(y \mid x) \big),
\end{align}
where $\mathrm{softmax}(\cdot)$ denotes exponentiation and normalization, such that we have valid distributions over $y \in \mathcal{Y}$, for each $x \in \mathcal{X}$.
This form makes it clear that the tempered and tilted distribution is essentially a \emph{logarithmic opinion pool} (LOP)~\citep{genest1986combining, heskes1997selecting}, which is also related to the concept of \emph{product of experts}~\citep{hinton2002training}.
However, unlike literature that typically considers LOP with non-negative weights that sum to one, we instead employ arbitrary $\alpha, \beta \in \mathbb{R}$, which allows for concentrating probability mass on the most confident outputs (e.g., converging towards proxy reward maximization as $\beta \to \infty$) or even inverting the contribution of anti-aligned models (e.g., if the negated proxy reward is closer to the gold reward, then $\beta < 0$ may be more effective).
We emphasize this aspect by calling this form of model as a \emph{Sharpened LOP} (SLOP).\footnote{Admittedly, also for the sake of utilizing a memorable and ironic acronym.}

Inspired by this logarithmic pooling perspective, we can also generalize to a pool of $m$ experts (i.e., generative and/or proxy reward models), denoted by $s_1, \ldots, s_m$, representing the score contributed by each expert, where $s_i(x, y) = \log p_i(y \mid x)$ for a generative model and $s_i(x, y) = r_i(x, y) \equiv \log q_i(y \mid x)$ for a reward model.\footnote{Due to later normalization via softmax, the normalizing constant of the alignment distribution may be omitted.}
These experts are combined, with weights $\omega := (\omega_1, \ldots, \omega_m) \in \mathbb{R}^m$, to form the SLOP given by
\begin{align}
\label{eq:SLOP}
\pi_\omega^*(y \mid x) = \mathrm{softmax}\left( \sum_{i=1}^m \omega_i s_i(x, y) \right) \propto \prod_{i=1}^m p_i(y \mid x)^{\omega_i},
\end{align}
where, for notational convenience, each $p_i$ denotes either a generative model or the corresponding alignment distribution $q_i$ of a reward model.

Directly sampling the SLOP distribution from~\eqref{eq:SLOP} is often intractable, when the output space $\mathcal{Y}$ is very large.
However,~\eqref{eq:SLOP} can also be interpreted as the solution of a regularized reward maximization problem, where the proxy reward is a weighted ensemble of expert scores, which allows us to approximately sample from the SLOP via the following extension of SBoN~\citep{verdun2025soft}.
By convention, we set the generative reference model as the first expert $p_1$.
For a given prompt $x$, we sample $n$ candidates $y_1, \ldots, y_n \stackrel{\text{iid}}{\sim} p_1(\cdot \mid x)$, evaluate candidate rewards as function of expert scores, as given by,
\begin{align}
\label{eq:SLOP_pseudo_reward}
r'(x, y_j) := (\omega_1 - 1) \log p_1(y_j \mid x) + \sum_{i=2}^m \omega_i s_i(x, y_j),
\end{align}
and select one candidate according to
\begin{align}
\label{eq:SLOP_SBoN}
\widehat{\pi}_{\omega,n}(y_j \mid x)
= \mathrm{softmax}(r'(x, y_j))
\propto
\exp \left(
\sum_{i=1}^m \omega_i s_i(x,y_j)
-
s_1(x,y_j)
\right),
\end{align}
where $\mathrm{softmax}(\cdot)$ denotes exponentiation and normalization over the set of $n$ candidates.
Note that the first expert weight is set to $(\omega_1 - 1)$, which accounts for the candidates being already sampled from the first expert.

\begin{corollary}
\label{co:SLOP_SBON}
Let $\widehat{\pi}_{\omega,n}$ denote the distribution realized via the above extension of SBoN, by selecting from $n$ sampled candidates $y_1, \ldots, y_n \stackrel{\text{iid}}{\sim} p_1(\cdot \mid x)$, according to~\eqref{eq:SLOP_SBoN}.
This approximates sampling the SLOP distribution $\pi_{\omega}^*$ given in~\eqref{eq:SLOP}, with
$\textrm{KL}(\pi_{\omega}^* \| \widehat{\pi}_{\omega,n}) = O(1/n)$.
\end{corollary}

\begin{proof}
This corollary follows immediately from Theorem 1 of~\citep{verdun2025soft}, which shows that $\textrm{KL}(\pi \| \widehat{\pi}_{\omega,n}) = O(1/n)$, for the distribution $\pi \propto p_1 \exp(r')$, since $\widehat{\pi}_{\omega,n}$ is produced by essentially applying SBoN with the reference model $p_1$ and the reward model $r'$ given by~\eqref{eq:SLOP_pseudo_reward}.
Expanding $r'$, we have $p_1 \exp(r') = \prod_{i=1}^m \exp(\omega_i s_i)$, and thus $\pi = \pi_{\omega}^*$.
\end{proof}

In the special case of one reference model and one reward model, this approximates tempered-and-tilted sampling, with candidate likelihoods proportional to
$\exp\big( (\alpha-1)\log p(y_j \mid x) + \beta r(x,y_j) \big)$, where $(\alpha, \beta)$ correspond to $(\omega_1, \omega_2)$.

\subsection{Reward hacking mitigation via calibrated model pooling}

\begin{algorithm}[t]
\caption{Calibrating SLOP Weights}
\label{alg:calibrate_slop}
\begin{algorithmic}
\REQUIRE reference model $p_1$; expert models
$s_1,\ldots,s_m$, where $s_1(x,y)=\log p_1(y\mid x)$;
calibration prompts $x_1,\ldots,x_k$; candidates per prompt $n$;
gold reward $g$; steps $T$; learning rate $\eta$; weight decay parameter $\lambda$;

\FOR{$i=1$ to $k$}
    \FOR{$j=1$ to $n$}
        \STATE Sample candidates $y_{ij} \sim p_1(y \mid x_i)$
        \STATE Evaluate gold rewards $g_{ij} \gets g(x_i,y_{ij})$
        \STATE Score samples $s_{\ell ij} \gets s_\ell(x_i,y_{ij})$, for $\ell=1,\ldots,m$
    \ENDFOR
\ENDFOR

\STATE Initialize $\omega^{(0)} \gets (1,\ldots,1)$

\FOR{$t=0$ to $T-1$}
    \FOR{$i=1$ to $k$}
        \STATE Compute candidate logits
        \[
        a_{ij} \gets
        \sum_{\ell=1}^m \omega_\ell^{(t)} s_{\ell ij} - s_{1ij},
        \quad \text{for } j = 1, \ldots, n
        \]
        \STATE Compute candidate weights
        \[
        \pi_{ij} \gets
        \frac{\exp(a_{ij})}
        {\sum_{q=1}^n \exp(a_{iq})},
        \quad \text{for } j = 1, \ldots, n
        \]
    \ENDFOR
    \STATE Compute objective estimate
    \[
    \widehat{J}(\omega^{(t)})
    \gets
    \frac{1}{k}
    \sum_{i=1}^k \sum_{j=1}^n
    \pi_{ij} g_{ij}
    -
    \lambda \|\omega^{(t)}\|^2
    \]
    \STATE Update $\omega^{(t+1)} \gets \omega^{(t)} +
    \eta \nabla_\omega \widehat{J}(\omega^{(t)})$
\ENDFOR

\RETURN $\omega^{(T)}$
\end{algorithmic}
\end{algorithm}

The motivation for considering regularized reward maximization is that neither the reference model $p$ nor the (proxy) reward model $r$ is perfect.
The ultimate objective is to attain a policy that maximizes a gold reward or closely approximates an ideal distribution, neither of which may be readily available or precisely known.
Intuitively, optimizing toward an inaccurate proxy reward model $r$ may be misleading and counter-productive to the ultimate objective.
This is a well-known phenomenon commonly referred to as \emph{reward hacking}~\citep{pan2022effects}, for which the recent works of~\citep{mroueh2025information, huang2025best, aminian2025best, khalaf2025inference} have provided theoretical characterizations and related mitigation strategies.
In particular, our approach extends upon the HedgeTune concept of~\citep{khalaf2025inference} by utilizing a small amount of calibration samples to optimize the SLOP weights $\omega$.

The ultimate objective (in principle) of our reward hacking mitigation approach is to maximize the expected gold reward, over the choice of SLOP weight parameters, as given by
\begin{align}
\label{eq:slop_weight_obj}
\sup_\omega \mathbb{E}_{x \sim \mathcal{D}, x \sim \pi_\omega^*(y \mid x)} [g(x, y)],
\end{align}
where the expectation is over both sampling $x$ from some input distribution $\mathcal{D}$, and sampling $y$ from the SLOP $\pi_\omega^*$.
We note that the optimization problem of~\eqref{eq:slop_weight_obj} does not always have a maximizer with finite weights, as sometimes the supremum reward may only be approached by sharpening the model towards a hard deterministic output with diverging weights.
See Appendix~\ref{app:optimal_SLOP_remarks} for further discussion.

In practice, we generally cannot arbitrarily evaluate the gold reward, which limits us to empirically estimate the objective in~\eqref{eq:slop_weight_obj} via a small number of calibration samples.
We assume that we are given $k$ input samples, $x_1, \ldots, x_k$, drawn \textit{i.i.d.} from the input distribution $\mathcal{D}$, and that for each $x_i$, we can query the gold reward for up to $n$ samples.
In practice, for situations with a verifiable reward, instead of querying the gold model up to $kn$ times, this assumption may instead be realized by just having the gold reference responses corresponding to the $k$ input samples (e.g., for math problems, responses can be checked against reference solutions to assign a correctness reward).
Algorithm~\ref{alg:calibrate_slop} uses these calibration samples to calculate the following empirical estimate of the objective in~\eqref{eq:slop_weight_obj}, upon which it performs gradient ascent, as given by
\begin{align}
\label{eq:calibration_objective}
\widehat{J}(\omega) =
\frac{1}{k} \sum_{i=1}^k \sum_{j=1}^n
\widehat{\pi}_{\omega,n}(y_{ij}\mid x_i)  g(x_i,y_{ij})
- \lambda \| \omega \|^2,
\end{align}
where $\lambda \geq 0$ is optionally used to regularize weight optimization and avoid divergence towards fully sharpened hard outputs.
Given calibrated weights $\omega$, we can either approximately sample the SLOP distribution via the method given by~\eqref{eq:SLOP_SBoN}, or instead employ a hard selection that simply picks the candidate $y_j$ that maximizes the weighted SLOP score $\sum_i \omega_i s_i(x, y_j) - s_1(x, y_j)$.

\subsection{Optimal weights for jointly Gaussian model scores}
\label{sec:gaussian_model}

Consider the binary output scenario, with $\mathcal{Y} = \{0, 1\}$, while $\mathcal{X}$ remains some arbitrary, large set of possible inputs (e.g., all questions that seek a verifiable binary response).
For any $x \in \mathcal{X}$, we let $y_x$ denote the corresponding correct response, and the gold reward is given by $g(x, y) = 1 - | y - y_x |$.
Thus, the gold reward maximization objective is to maximize the probability of sampling the correct response.
For each model $p_i$ and $x \in \mathcal{X}$, let the log-posterior ratio (evaluated on the correct response) be denoted by
\begin{align}
L_i(x) := \log \frac{p_i(y_x \mid x)}{p_i(1 - y_x \mid x)}.
\end{align}
For this illustrative example, we assume that, over $x \sim \mathcal{D}$, $L(x) := (L_1(x), \ldots, L_m(x))$ are jointly Gaussian with mean vector $\bar \mu$ and positive-definite covariance $\Sigma$, to depict a pool of models with potentially varying and correlated reliability.
Let the weighted sum of these ratios be denoted by $Z(x) := \omega^T L(x)$, which is a scalar Gaussian with mean $\mu = \omega^T \bar \mu$ and variance $\sigma^2 = \omega^T \Sigma \omega$.
Classical detection theory~\citep{kay1998detection} establishes that setting $\omega = c \Sigma^{-1} \bar \mu$, for any $c > 0$, maximizes the detection margin $(\mu / \sigma) = \sqrt{\bar \mu^T \Sigma^{-1} \bar \mu}$.
Note that the same analysis and result also applies to the heavier-tailed Cauchy distribution, as discussed in Appendix~\ref{app:cauchy}.

In this binary example, the SLOP evaluated on the correct solution reduces to $\pi_\omega^*(y_x \mid x) = \rho(Z(x))$, where $\rho(z) := 1 / (1 + \exp(-z))$ is the sigmoid function.
Thus, the expected gold reward is given by
\begin{align}
\mathbb{E}_{x \sim \mathcal{D}, y \sim \pi_\omega^*(y \mid x)} [g(x, y)]
= \mathbb{E}_{x \sim \mathcal{D}} [\pi_\omega^*(y_x \mid x)]
= \mathbb{E}_{Z \sim \mathcal{N}(\mu, \sigma^2)} [\rho(Z)] \approx \Phi \left(\frac{\mu}{\sqrt{\frac{8}{\pi} + \sigma^2}} \right),
\end{align}
where $\Phi$ is the probit function (standard normal CDF) and this classical approximation is seen in~\citep{spiegelhalter1990sequential, mackay1992evidence}, with further recent analysis in~\citep{mucsanyi2025rethinking}.
Under this approximation, and setting $\omega = c \Sigma^{-1} \bar \mu$, the expected gold reward is given as
\begin{align}
\mathbb{E}_{x \sim \mathcal{D}, y \sim \pi_\omega^*} [g(x, y)] \longrightarrow \Phi \left(\sqrt{ \bar \mu^T \Sigma^{-1} \bar \mu} \right),
\quad \text{as } c \longrightarrow \infty.
\end{align}
Note that saturating the weights is necessary to maximize accuracy, since we are sampling from $\pi_\omega^*$, which only approaches a hard decision as $c \to \infty$.
However, instead of sampling, if we make a hard decision directly as $\arg \max_y \pi_\omega^*(y \mid x)$, with any fixed $c > 0$, which is equivalent to the optimal log-likelihood ratio test, then the accuracy would be exactly $\Pr[Z > 0] = \Phi \left(\sqrt{ \bar \mu^T \Sigma^{-1} \bar \mu} \right)$.

\subsubsection{Example impact of diversity vs. correlation}
Consider the simple example with $\bar{\mu}=\mathbf{1}_m$ and $\Sigma=\sigma^2_0 ((1-\eta) I_m + \eta \mathbf{1}_m \mathbf{1}_m^T)$, where $\mathbf{1}_m=[1,\ldots, 1]^T$ is an all-ones vector of size $m$ and $\eta \in [0, 1)$ is the correlation factor.
Then, we have
\begin{align}
    \bar{\mu}^T \Sigma^{-1} \bar{\mu} &=
\frac{m}{\sigma^2_0 (1 + \eta(m-1))},
\end{align}
which demonstrates that with uncorrelated ($\eta=0$) experts, the accuracy $\Phi(\sqrt{m/\sigma^2_0})$ approaches one as $m$ grows.
However, with correlation $\eta > 0$, the accuracy saturates to $\Phi(\sqrt{1/\eta\sigma^2})$ for $m \gg 1$.
Thus, it is beneficial to ensemble diverse experts that have low correlation, and ideally independent error patterns.

\begin{figure}[ht]
    \centering

    \begin{subfigure}{0.55\linewidth}
        \centering
        \includegraphics[width=\linewidth]{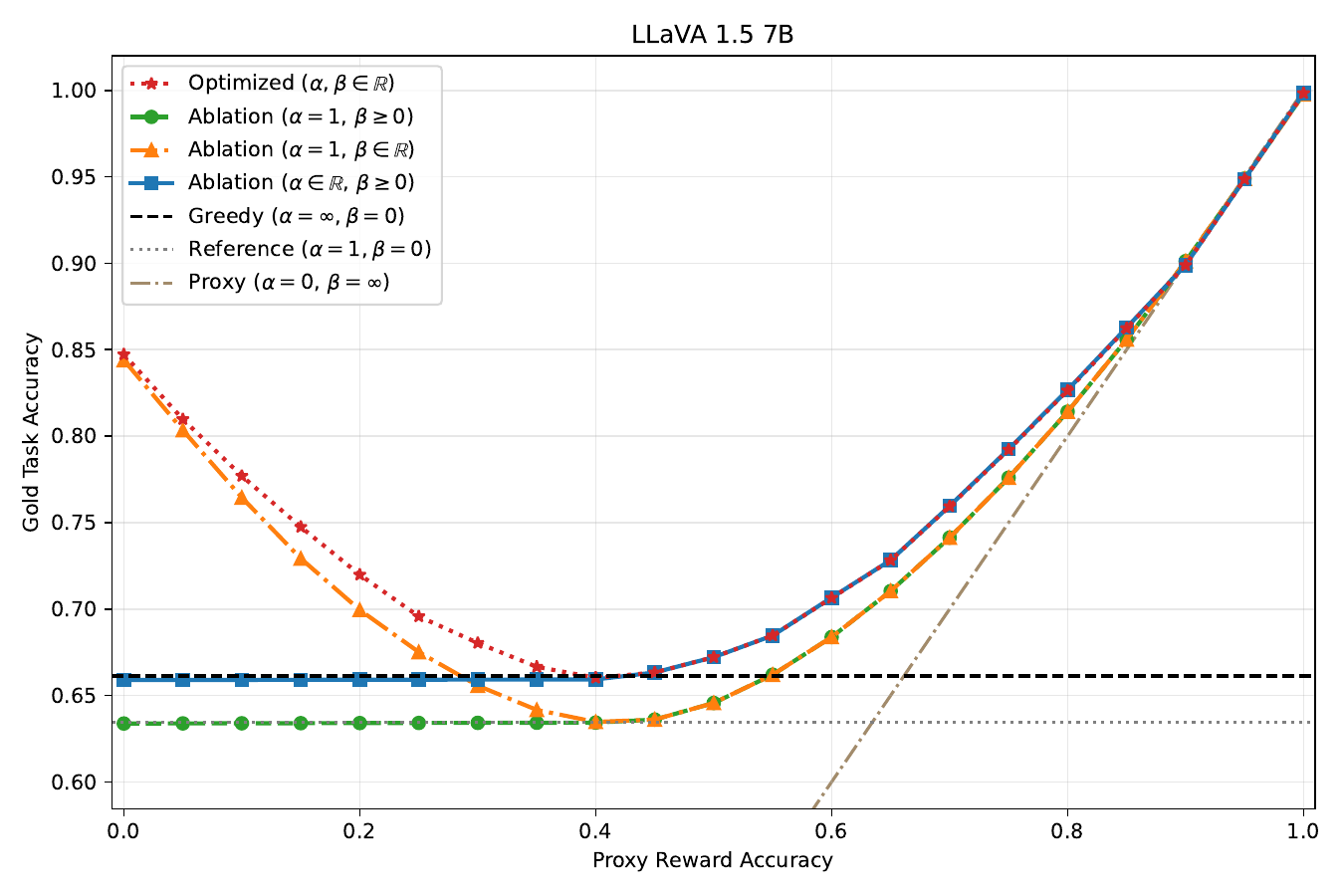}
        \caption{Accuracy}
\label{fig:sqa_llava}
    \end{subfigure}
    \hfill
    \begin{subfigure}{0.44\linewidth}
        \centering
        \includegraphics[width=\linewidth]{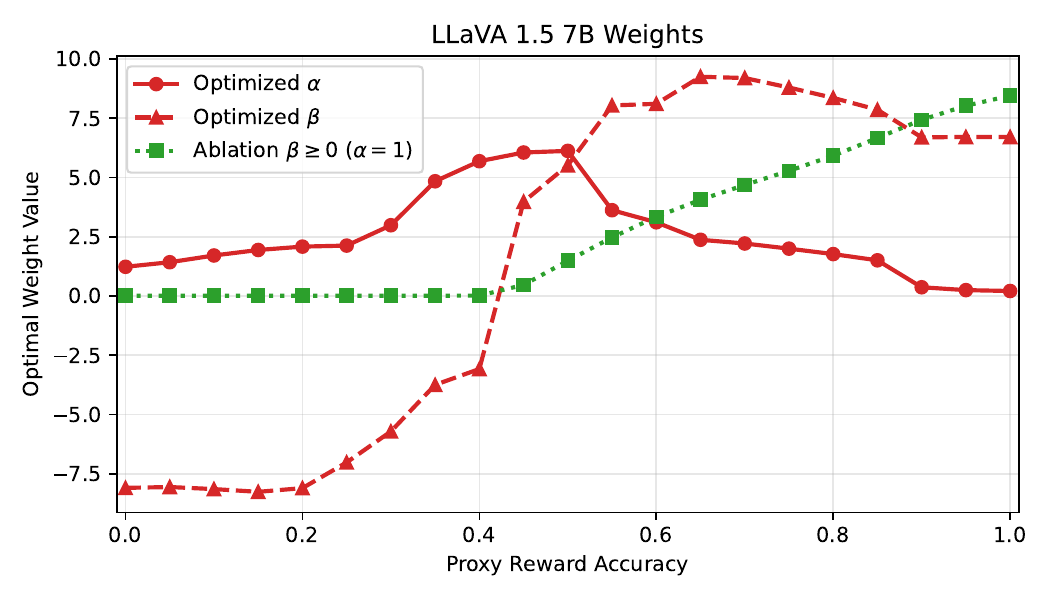}
        \caption{Optimized SLOP weights}
\label{fig:sqa_llava_params}
    \end{subfigure}

    \caption{SLOP with LLaVA-1.5-7B paired with synthetic proxy reward with varying accuracy, evaluated on SQA.}
    \label{fig:sqa_llava_combined}
\end{figure}

\begin{figure}[t]
    \centering

    \begin{subfigure}{0.49\linewidth}
        \centering
        \includegraphics[width=\linewidth]{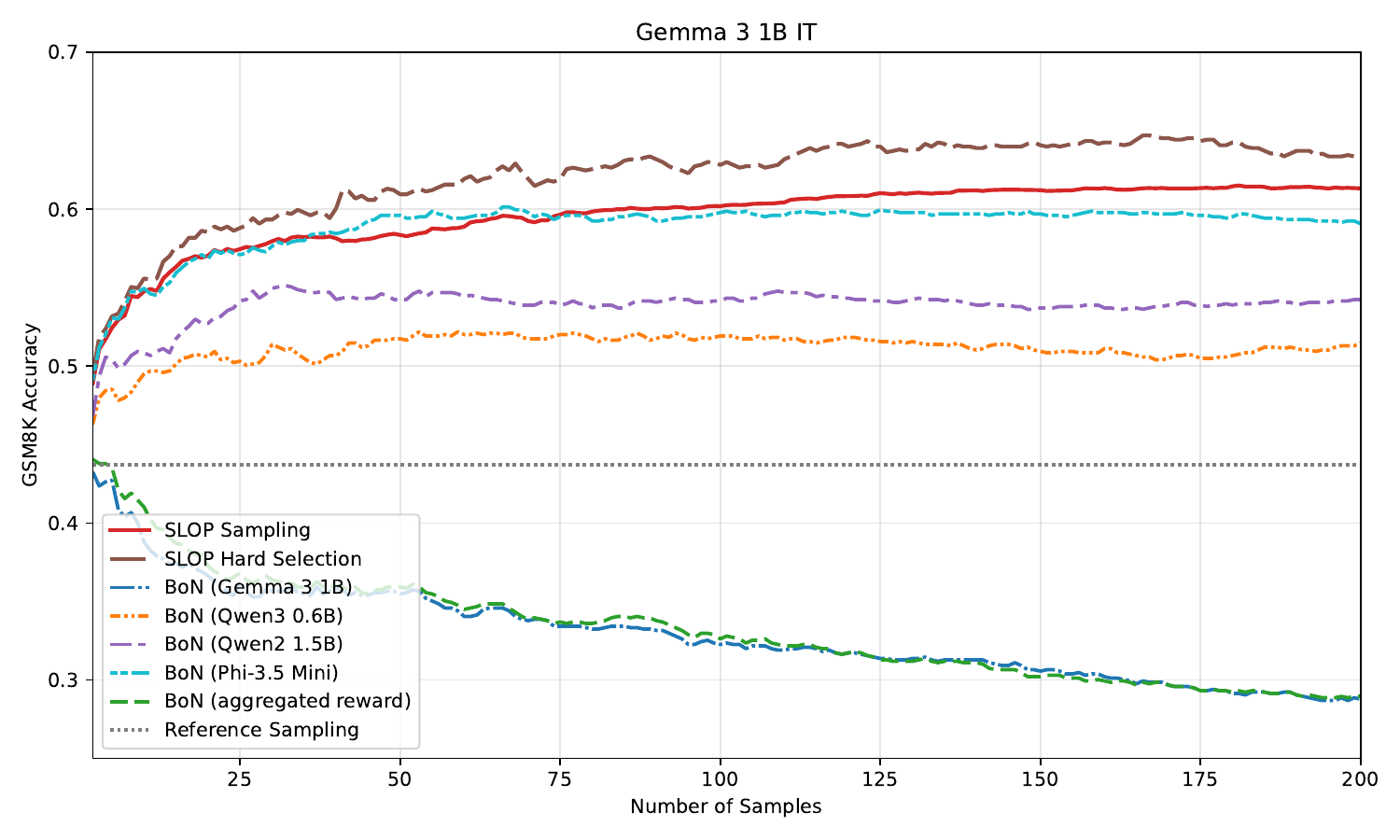}
    \end{subfigure}
    \hfill
    \begin{subfigure}{0.49\linewidth}
        \centering
        \includegraphics[width=\linewidth]{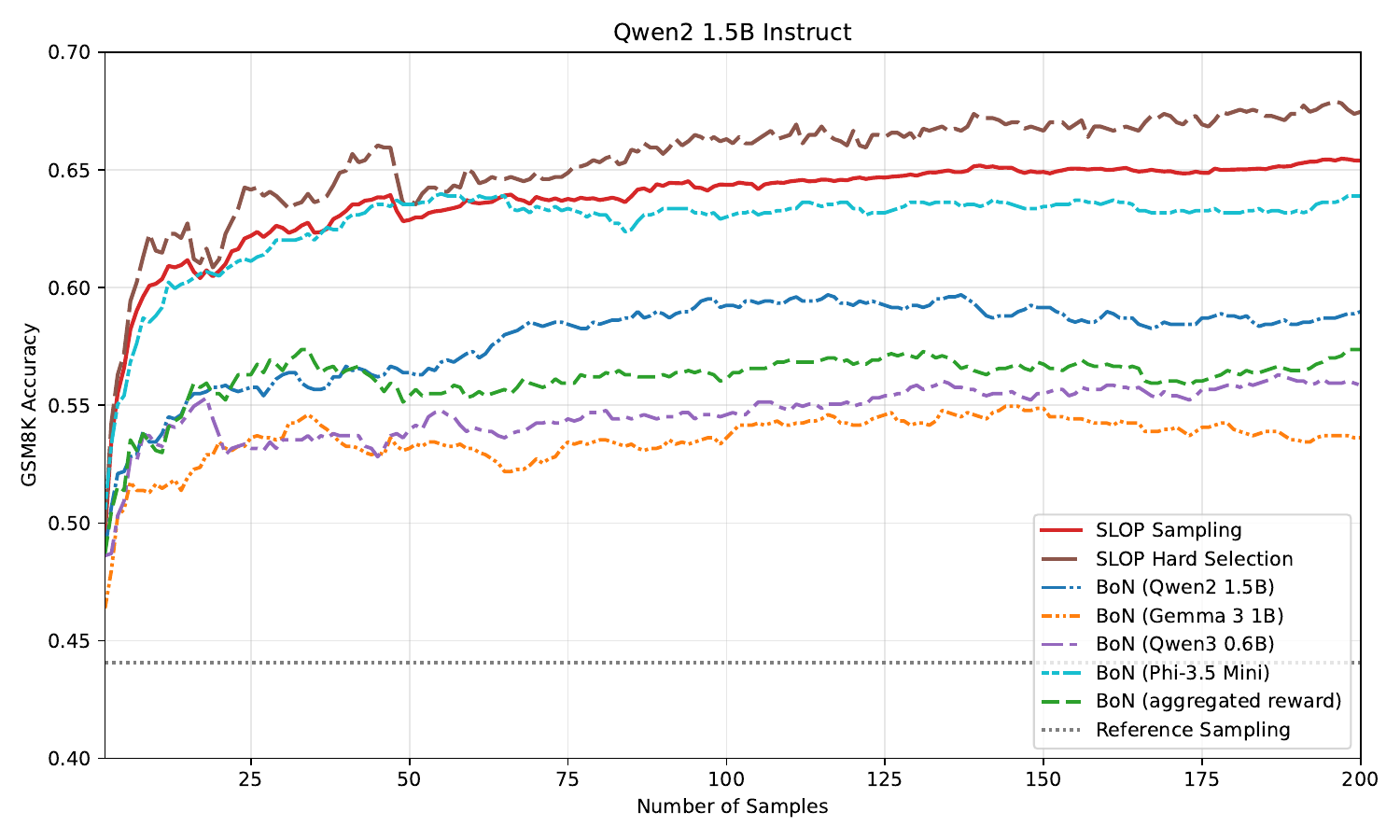}
    \end{subfigure}

    \vspace{0.5em}

    \begin{subfigure}{0.49\linewidth}
        \centering
        \includegraphics[width=\linewidth]{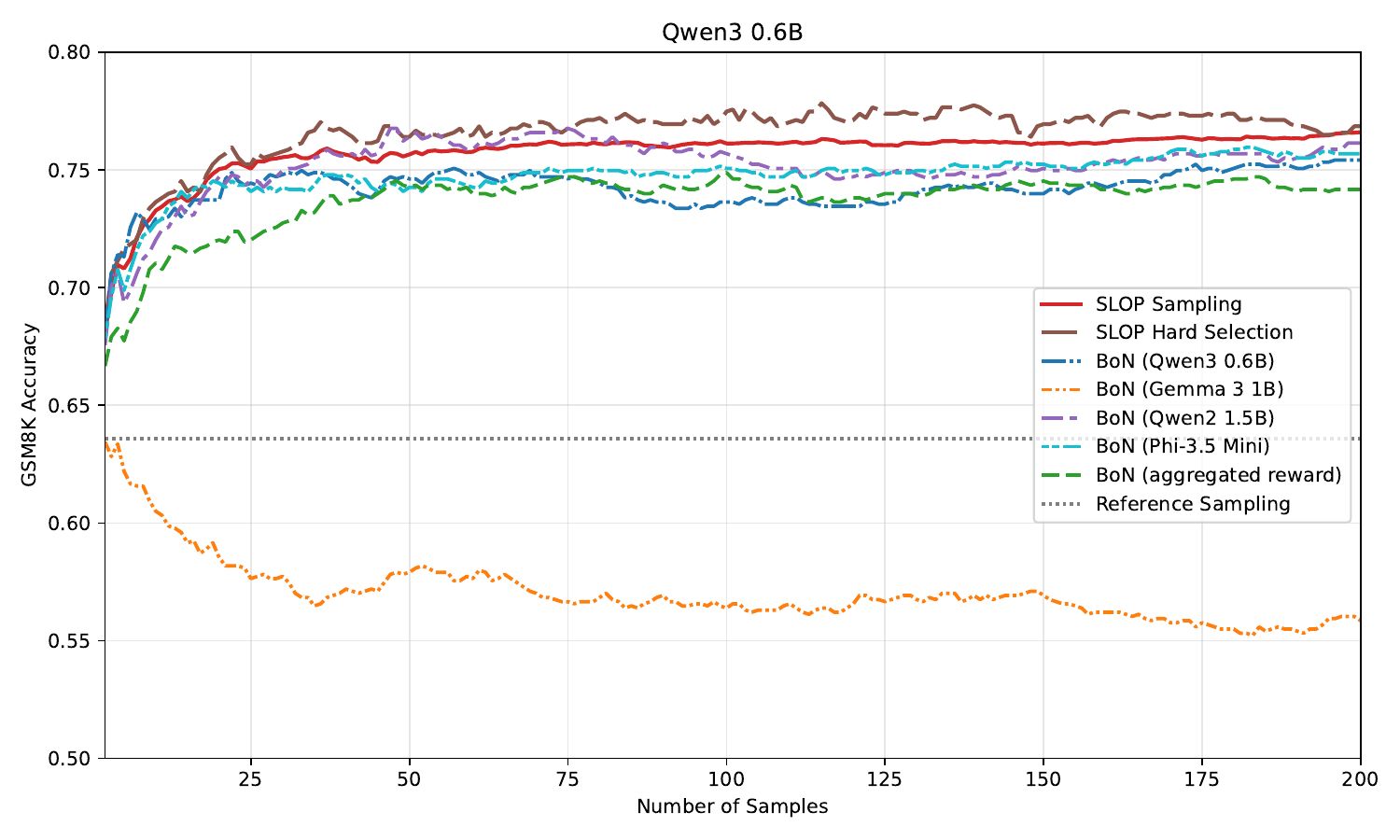}
    \end{subfigure}
    \hfill
    \begin{subfigure}{0.49\linewidth}
        \centering
        \includegraphics[width=\linewidth]{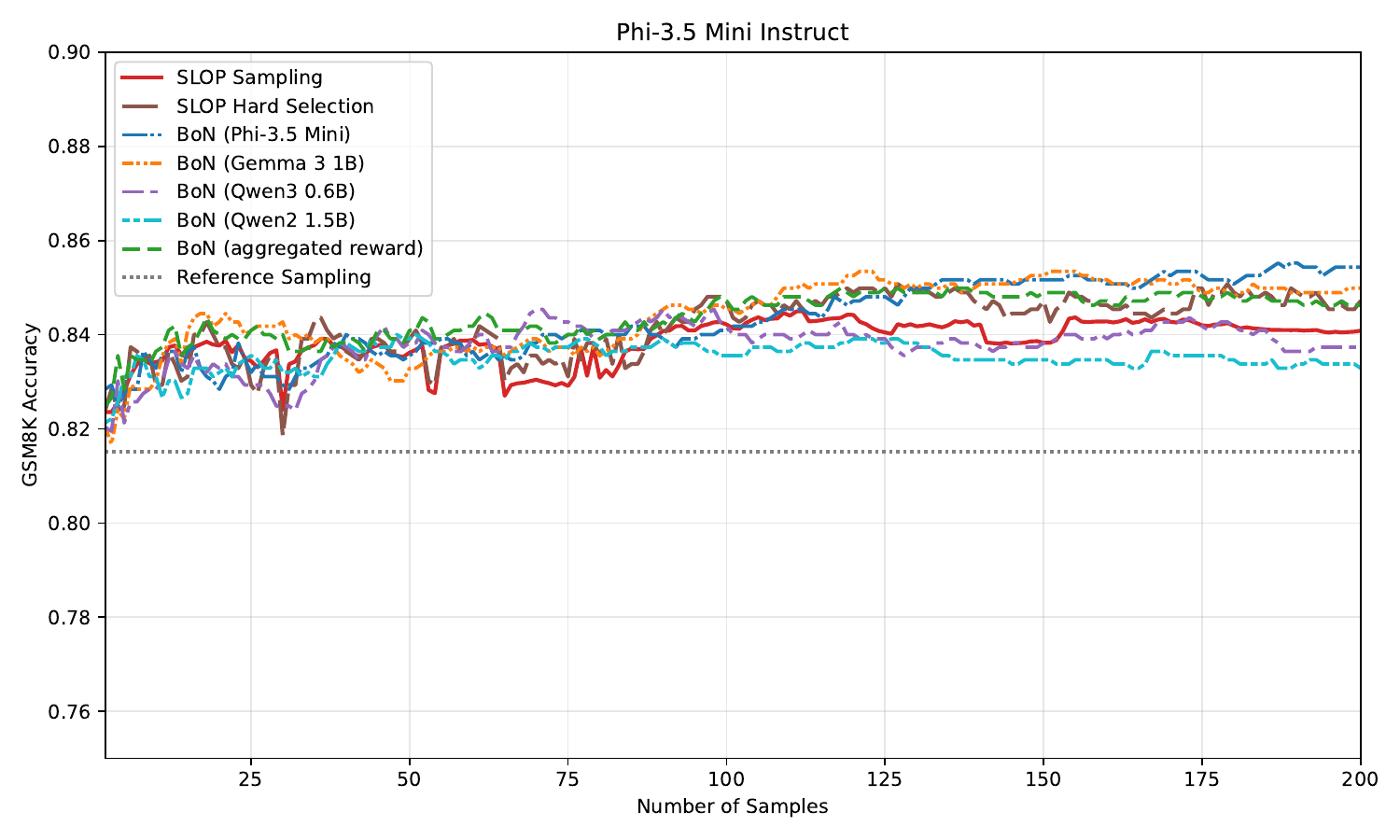}
    \end{subfigure}

    \caption{SLOP (with 4 LLMs) evaluated on GSM8K, with different LLMs as the reference model (indicated by the subfigure titles).}
    \label{fig:gsm8k_all}
\end{figure}

\section{Experimental results}
\label{sec:experiments}

\subsection{Reward-guided visual question answering}
\label{sec:SQA_exp}

This experiment considers the task of multiple-choice, visual question answering, evaluated with the ScienceQA (SQA)~\citep{lu2022learn} benchmark dataset.
The gold reward is one for the correct answer and zero otherwise.
Thus, the objective is to maximize accuracy.
To investigate the impact of imprecise reward models, we take a pretrained VLM as the generative reference model and pair it with a synthetic proxy reward model that may instead reward incorrect answers, at some given error rate.
Since SQA multiple-choice questions contain only two to five answer options, we constrain VLM generation to single-token decoding over the small response set $\mathcal{Y} = \{ \text{A}, \text{B}, \text{C}, \text{D}, \text{E} \}$.
This simplification allows for exact sampling of the tempered-and-tilted (two-expert SLOP) distribution and focuses this controlled experiment on weight optimization for inaccurate proxy reward models.
The $(\alpha, \beta)$ weights are optimized with $200$ calibration samples from the SQA test split, while the remaining $4041$ samples of the SQA test split are held-out for evaluation.
See Appendix~\ref{app:more_SQA} for further details and additional results with the Gemma-3-4B~\citep{team2025gemma} and Qwen3-VL-4B~\citep{bai2025qwen3} VLMs, which yield similar findings.

Figure~\ref{fig:sqa_llava} plots the results for the LLaVa-1.5-7B~\citep{liu2023llava, liu2024improved} VLM paired with a synthetic proxy reward model at varying accuracies.
The ``optimized'' curve is the result of freely optimizing $\alpha, \beta \in \mathbb{R}$, while the ``ablation'' curves constrain with $\alpha = 1$ and/or $\beta \geq 0$.
Two baselines ignore the proxy reward model ($\beta = 0$): sampling the ``reference'' model ($\alpha = 1$, $63.4\%$ accuracy) and ``greedy'' token selection ($\alpha = \infty$, $66.1\%$ accuracy).
On the other hand, ignoring the reference model and purely following the ``proxy'' reward ($\alpha = 0$, $\beta = \infty$) yields the proxy reward accuracy, which works well for accurate proxies, but quickly degrades (along the slope-one line, falling off the plot) for inaccurate proxies, which depicts the effect of hacking with a faulty reward model.

The SLOP curves generally dominate the performance of the individual reference and proxy models, by effectively combining the two.
For the ablations with $\beta \geq 0$, performance gracefully degrades back to the baselines as the proxy reward model becomes more unreliable.
The optimized and ablation curves, with unconstrained $\beta$, yield increased performance for low proxy reward accuracies, since below $40\%$ accuracy (which is roughly the random guessing accuracy for SQA), the inaccurate proxy more reliably indicates an incorrect answer, providing a useful signal that can be exploited with weight $\beta < 0$.
This can be seen in Figure~\ref{fig:sqa_llava_params}, which plots the optimized $(\alpha, \beta)$ weights across varying proxy reward accuracies, showing how the confidence weight for each model correspondingly varies.
At high proxy accuracies, the optimal $\alpha$ is close to zero, as following the proxy suffices.
However, at middling proxy accuracies, $\alpha$ is larger, showing more reliance on the confidence of the reference model.
For comparison, it also plots the optimal $\beta$ for the ablation (with fixed $\alpha = 1$ and $\beta \geq 0$), which simply falls back to $0$ at $40\%$ proxy reward accuracy or below.
We note that the ablation with fixed $\alpha = 1$ and $\beta \geq 0$ is essentially the HedgeTune method~\citep{khalaf2025inference}.

\subsection{Math reasoning with generative model pooling}
\label{sec:gsm8k_exp}

This experiment explores the task of math reasoning, evaluated on
the GSM8K~\citep{cobbe2021training} benchmark, with a SLOP that ensembles four LLMs:
Gemma-3-1B~\citep{team2025gemma}, Qwen2-1.5B~\citep{yang2024qwen2}, Qwen3-0.6B~\citep{yang2025qwen3}, and Phi-3.5-mini~\citep{abdin2024phi3}.
In this task, the input prompt is a grade school math word problem, and responses with the correct numerical solution receive a gold reward of one, and zero otherwise.
Thus, the objective is to maximize accuracy.

We apply the approximate SLOP sampling method of~\eqref{eq:SLOP_SBoN}, with one of the LLMs serving as the reference model to sample up to $200$ responses, and the expert scores given by the token-averaged log-likelihoods of each LLM, i.e., $s_i(x, y) = (1/L_y) \log p(y \mid x)$, where $L_y$ is the number of tokens in the response $y$ and is introduced to avoid length bias.
We apply Algorithm~\ref{alg:calibrate_slop} to calibrate the SLOP weights using $200$ calibration examples from the GSM8K test split, while the remaining $1119$ samples are held-out for evaluation.

Figure~\ref{fig:gsm8k_all} plots performance, while varying the number of candidate samples $n$, for each LLM selected as the reference model to generate candidates.
The BoN baselines select from the candidates using the individual LLM expert scores or the simple sum of the expert scores as the proxy reward.
SLOP sampling tends to outperform the baselines, with more stable, monotonic improvement as the number of samples increase.
We also evaluate ``hard selection'' of the candidate with the highest weighted SLOP score, which further improves performance over SLOP sampling.
An exceptional case occurs when the strongest LLM (Phi-3.5) is the reference model, where all methods yield only slight improvement and fall within a few percentage points of each other. 
See Appendix~\ref{app:more_gsm8k} for further details, with the figures reproduced in larger format as Figures~\ref{fig:gsm8k_gemma3},~\ref{fig:gsm8k_qwen2},~\ref{fig:gsm8k_qwen3}, and~\ref{fig:gsm8k_phi3.5}.

\subsection{SLOP weight selection from score statistics}
\label{sec:gsm8k_covariance}

This experiment continues with the same math reasoning setting of Section~\ref{sec:gsm8k_exp}, and explores whether effective SLOP weights can be estimated from expert score statistics rather than optimization, by applying the insights and Gaussian model assumption of Section~\ref{sec:gaussian_model}.
Since this setting is not limited to binary responses, we instead apply this assumption to model the relative scores for any pair of correct and incorrect responses.
We estimate the mean $\bar{\mu}$ and covariance $\Sigma$ of these relative scores from all such sampled pairs across the $200$ calibration questions.
Then, we assign weights as $\omega = \Sigma^{-1} \bar \mu$, which we call the inverse covariance SLOP.
To avoid a potentially ill-conditioned inverse, we also consider the diagonalized covariance SLOP, which first drops the off-diagonal terms of the covariance matrix before performing the inverse.
As another ablation, we also compare with the simple fixed weight strategy with $\omega = \mathbf{1}$.

Table~\ref{tab:gsm8k_results} summarizes the average performance of these three ablations for SLOP weight selection, alongside the baseline reference model sampling, SLOP with calibrated weights, and the top-performing BoN baseline.
Performance plots are found in Figures~\ref{fig:gsm8k_gemma3_ablation},~\ref{fig:gsm8k_qwen2_ablation},~\ref{fig:gsm8k_qwen3_ablation}, and~\ref{fig:gsm8k_phi3.5_ablation} in Appendix~\ref{app:more_gsm8k}.
Table~\ref{tab:gsm8k_weights} lists the optimized SLOP weights, along with the cosine similarity scores between these weights and each of the three ablation methods for assigning weights, and also reports the condition number of the relative score covariance matrix.
Generally, the covariance-based methods improve upon the reference model sampling baseline (except the inverse covariance case with Gemma-3, which seems to suffer from ill-conditioning), and can be competitive against the calibrated SLOP.

\begin{table}[ht]
\centering
\caption{Average accuracy (over $n=20, \ldots, 200$) evaluated on GSM8K for each reference model:
Baseline reference sampling, SLOP sampling, SLOP Hard selection, top-performing BoN, Fixed weights ($\omega = \mathbf{1}$), Diagonal Covariance hard selection, and Inverse Covariance hard selection.  
Error intervals are $\pm \sigma$, while the
\textbf{first} and \underline{second} are highlighted.}
\label{tab:gsm8k_results}
\small
\setlength{\tabcolsep}{3.5pt}
\begin{tabular}{@{}lccccccc@{}}
\toprule
\textbf{Ref. Model} &
\textbf{Baseline \%} &
\textbf{SLOP \%} & 
\textbf{SLOP$_\text{Hard}$ \%} & 
\textbf{BoN* \%} &
\textbf{Fixed \%} & 
\textbf{DiagCov \%} & 
\textbf{InvCov \%} \\
\midrule
Gemma-3 & 
  $43.70\,{\scriptstyle \pm 0.10}$ & 
  $\underline{60.13}\,{\scriptstyle \pm 1.27}$ & 
  $\mathbf{62.76}\,{\scriptstyle \pm 1.60}$ & 
  $59.37\,{\scriptstyle \pm 0.67}$ &
  $55.50\,{\scriptstyle \pm 0.42}$ &
  $47.06\,{\scriptstyle \pm 0.30}$ &
  $43.44\,{\scriptstyle \pm 0.43}$ \\
Qwen2 & 
  $44.08\,{\scriptstyle \pm 0.10}$ & 
  $\underline{64.23}\,{\scriptstyle \pm 0.95}$ & 
  $\mathbf{65.96}\,{\scriptstyle \pm 1.30}$ & 
  $63.21\,{\scriptstyle \pm 0.62}$ & 
  $55.96\,{\scriptstyle \pm 0.49}$ &
  $57.15\,{\scriptstyle \pm 0.63}$ &
  $62.91\,{\scriptstyle \pm 0.84}$ \\
Qwen3 & 
  $63.58\,{\scriptstyle \pm 0.10}$ & 
  $76.05\,{\scriptstyle \pm 0.31}$ & 
  $\mathbf{76.93}\,{\scriptstyle \pm 0.51}$ & 
  $75.51\,{\scriptstyle \pm 0.61}$ &
  $71.55\,{\scriptstyle \pm 0.45}$ &
  $\underline{76.49}\,{\scriptstyle \pm 0.33}$ &
  $76.16\,{\scriptstyle \pm 0.50}$ \\
Phi-3.5 & 
  $81.51\,{\scriptstyle \pm 0.08}$ & 
  $83.91\,{\scriptstyle \pm 0.42}$ & 
  $\underline{84.27}\,{\scriptstyle \pm 0.61}$ & 
  $\mathbf{84.47}\,{\scriptstyle \pm 0.41}$ &
  $\underline{84.27}\,{\scriptstyle \pm 0.51}$ &
  $83.77\,{\scriptstyle \pm 0.84}$ &
  $83.67\,{\scriptstyle \pm 0.64}$ \\
\bottomrule
\end{tabular}
\end{table}

\begin{table}[ht]
\centering
\caption{For each reference model:
optimized SLOP weights $\omega$, cosine similarity scores of weight selection ablations, and covariance condition number.}
\label{tab:gsm8k_weights}
\small
\setlength{\tabcolsep}{4pt}
\begin{tabular}{@{}lcccccccc@{}}
\toprule
\textbf{Ref. Model} &
${\omega_{\text{Gemma-3}}}$ &
${\omega_{\text{Qwen2}}}$ &
${\omega_{\text{Qwen3}}}$ &
${\omega_{\Phi\text{-3.5}}}$ & 
\textbf{Fixed} &
\textbf{DiagCov} & 
\textbf{InvCov} & 
\textbf{$\mathbf{Cond}(\Sigma)$} \\
\midrule
Gemma-3-1B & $-0.038$ & $2.55$  & $10.2$ & $33.9$ & 
  $0.657$ &
  $0.824$ &
  $0.815$ &
  $4256.9$ \\
Qwen2-1.5B & $-9.44$  & $-2.36$ & $1.94$ & $35.0$ &
  $0.346$ &
  $0.244$ &
  $0.422$ &
  $64.40$ \\
Qwen3-0.6B & $-3.55$  & $12.7$  & $17.0$ & $17.1$ &
  $0.787$ &
  $0.970$ &
  $0.938$ &
  $42.32$ \\
Phi-3.5-mini & $4.65$   & $6.81$  & $6.12$ & $8.82$ & 
  $0.975$ &
  $0.962$ &
  $0.728$ &
  $95.33$ \\
\bottomrule
\end{tabular}
\end{table}

\section{Discussion}
\label{sec:discussion}

We develop inference-time alignment methods intended to improve robustness to reward hacking and to better combine signals from multiple generative or reward models.
Potential positive impacts include more effective deployment of foundational models, especially when proxy rewards are imperfect.
Potential negative impacts include misuse of inference-time methods to make models better at satisfying flawed, biased or harmful objectives, or amplification of undesirable model behaviors when the calibration signal is misaligned.
Misuse risks are challenging to fully mitigate, however we emphasize the importance of calibration with gold or verifiable rewards.

\subsection{Limitations}
\label{sec:limitations}

A limitation of our work is the somewhat narrow set of experiments that we conducted.
As alignment is a fundamental topic with broad applications, there is certainly much more to be investigated.
Larger and more capable models should be considered, as well as different modalities, beyond LLMs and VLMs, such as multimedia generation or vision-language-action models for embodied AI applications.
Our empirical study was limited to question answering and math reasoning tasks, however broader datasets, benchmarks, and tasks should also be explored.
Besides tasks with verifiable rewards, alignment for other objectives, such as safety and trust (e.g., robustness to jailbreaking attacks, or reducing hallucinations and toxicity) are also interesting directions to be evaluated.

However, our aim was mainly to provide theoretic insights that inspire new developments, which we tested with focused and controlled experiments.
We were also limited by the modest compute resources (a single desktop GPU) available for this study, as discussed in Appendix~\ref{app:compute}.

\subsection{Concluding remarks and future work}

We provide a generalized inference-time alignment framework that flexibly ensembles multiple generative and/or reward models.
In this perspective, continual adaptation and reward-hacking mitigation take the form of calibrating the weights of this ensemble, which depends not only on individual expert quality, but also on the correlation among experts.
Our future work will broaden the empirical evaluation beyond our current proof-of-concept experiments, in new directions, as discussed above.
We will further investigate the roles of correlation and diversity in expert ensembles, and how they can be cultivated to improve performance.

\bibliographystyle{plainnat}
\bibliography{refs}


\appendix

\section{Proof of Corollary~\ref{co:temper_tilt}}

This can be shown as a corollary of the $\alpha = 1$ special case considered by~\citet{korbak2022rl}.
However, for completeness, we give the straightforward derivation, starting from~\eqref{eq:tempered_reward}, 
\begin{align}
\pi_{\alpha, \beta}^*(y \mid x) &=
\arg \max_\pi \mathbb{E}_{Y \sim \pi(y \mid x)}
  \big[ \beta r(x,Y) + (\alpha - 1) \log p(Y \mid x) \big]
  - \mathrm{KL}\big( \pi(y \mid x) \| p(y \mid x) \big) \\
&= \arg \max_\pi \mathbb{E}_{Y \sim \pi(y \mid x)}
  \left[ \beta r(x,Y) + \log \frac{p(Y \mid x)^\alpha}{C_{\alpha, x} \pi(Y \mid x)} \right]
  + \log C_{\alpha, x} \label{eq:temper_equivalent} \\
&= \arg \max_\pi \mathbb{E}_{Y \sim \pi(y \mid x)}
  \left[ \log \frac{p(Y \mid x)^\alpha \exp(\beta r(x,Y))}{C_{\alpha, \beta, x} \pi(Y \mid x)} \right] + \log C_{\alpha, \beta, x} \\
&= \arg \min_\pi \mathrm{KL} \left(
\pi (y \mid x) \bigg \|
\frac{p(y \mid x)^\alpha \exp(\beta r(x, y))}{C_{\alpha, \beta, x}} \right),
\end{align}
where \eqref{eq:temper_equivalent} is equivalent to \eqref{eq:alt_tempered_reward}, and the final KL minimization is achieved by \eqref{eq:temper_tilted}.

\section{Optimizing SLOP weights for reward maximization}
\label{app:optimal_SLOP_remarks}

We remark that the optimization problem of~\eqref{eq:slop_weight_obj} does not always have a maximizer with finite weights.
To illustrate, consider the following simple binary example, where $\mathcal{X} = \mathcal{Y} = \{0, 1\}$, with $g(x, y) = 2 |y - x|$, input distribution $\mathcal{D} = \mathrm{Bernoulli}(0.5)$, and a single expert ($m=1$) SLOP,
\begin{align}
\pi_\omega^*(y \mid x) = \mathrm{softmax}(\omega_1 \log p_1(y \mid x))
= \rho \left( \omega_1 \log \frac{p_1(y \mid x)}{p_1(1-y \mid x)} \right),
\end{align}
where $\rho(z) := 1 / (1 + \exp(-z))$ is the sigmoid function, and we assume that $p_1(y \mid x) \in (0, 1)$ for all $x, y \in \{0, 1\}$.
For this example, the expected reward is given by
\begin{align}
\mathbb{E}_{x \sim \mathcal{D}, y \sim \pi_\omega^*(y \mid x)} [g(x, y)] = \rho \left( \omega_1 \log \frac{p_1(1 \mid 0)}{p_1(0 \mid 0)} \right) + \rho \left( \omega_1 \log \frac{p_1(0 \mid 1)}{p_1(1 \mid 1)} \right),
\end{align}
as a function of two log-likelihood ratios.
If both ratios are zero, then the expected reward equals one for any $\omega_1$.
If one ratio is strictly positive and the other is strictly negative, then a unique maximum exists for some finite $\omega_1$.
However, if both ratios have the same sign, or if only one of the ratios is zero, then the supremum of the expected reward (which equals two, if they are both non-zero, or $1.5$, if one ratio is zero) is only approached as $\omega_1 \to \infty$ (if the ratios are non-negative) or $\omega_1 \to -\infty$ (if the ratios are non-positive).
We can interpret these latter cases as the model $p_1$ being universally aligned or anti-aligned (across all inputs $x \in \{0, 1\}$), and divergence of the weight $\omega_1$ to $\pm \infty$ has the effect of sharpening the model towards a hard (deterministic) output.

\section{Generalizing the Gaussian score assumption to the Cauchy distribution}
\label{app:cauchy}

To consider a heavier-tailed distribution than Gaussian,
suppose the log-likelihood ratio (LLR) vector $L(x)$ follows the multivariate Cauchy distribution:
$L(x) \sim \mathrm{Cauchy}(\bar{\mu}, \Sigma)$, where $\bar{\mu}$ is a location vector and $\Sigma$ is a scatter matrix.
The weighted sum of the LLRs $Z(x):=\omega^T L(x)$ follows the univariate Cauchy distribution: $Z(x)\sim \mathrm{Cauchy}(\mu, \sigma)$, where $\mu=\omega^T \bar{\mu}$ is the location and $\sigma=\sqrt{\omega^T \Sigma \omega}$ is the scale.
We aim to maximize the detection margin $(\mu/\sigma)$ as the accuracy is given by the complementary CDF of the Cauchy distribution: 
\begin{align}
    \Pr(Z(x) > 0) =
    \frac{1}{2} + \frac{1}{\pi}
    \arctan\left(\frac{\mu}{\sigma} \right).
\end{align}
The optimal weights that maximize the detection margin ($\mu/\sigma$) is given by $\omega = c\Sigma^{-1} \bar{\mu}$, which is the same as the Gaussian case.

\begin{figure*}[ht]
\begin{center}
\includegraphics[width=0.9\textwidth]{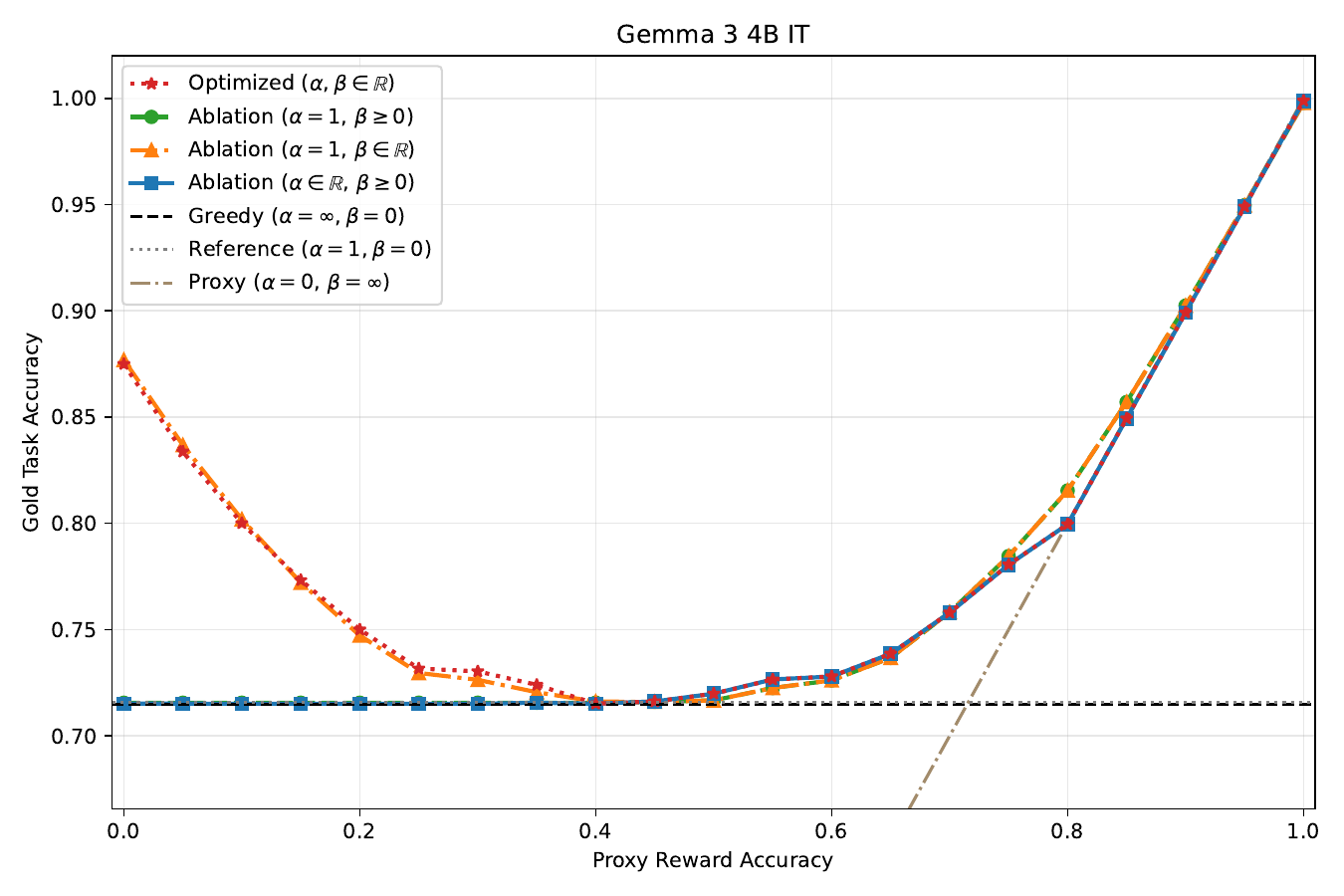}
\caption{SLOP with Gemma-3-4B paired with synthetic proxy reward with varying accuracy, evaluated on SQA.}
\label{fig:sqa_gemma}
\end{center}
\end{figure*}

\begin{figure*}[ht]
\begin{center}
\includegraphics[width=0.9\textwidth]{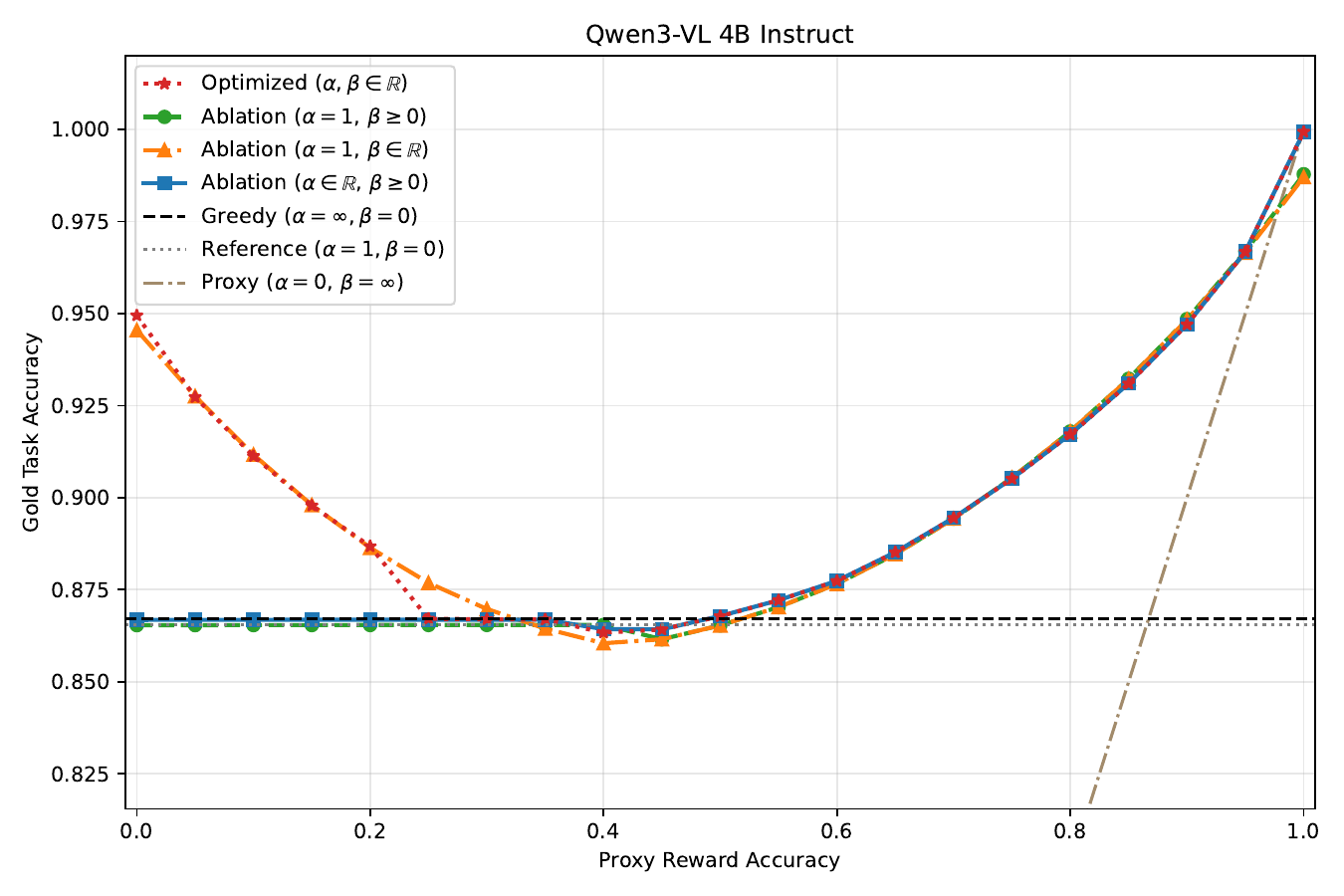}
\caption{SLOP with Qwen3-VL-4B paired with synthetic proxy reward with varying accuracy, evaluated on SQA.}
\label{fig:sqa_qwen}
\end{center}
\end{figure*}

\begin{figure}[ht]
    \centering

    \begin{subfigure}{0.48\linewidth}
        \centering
        \includegraphics[width=\linewidth]{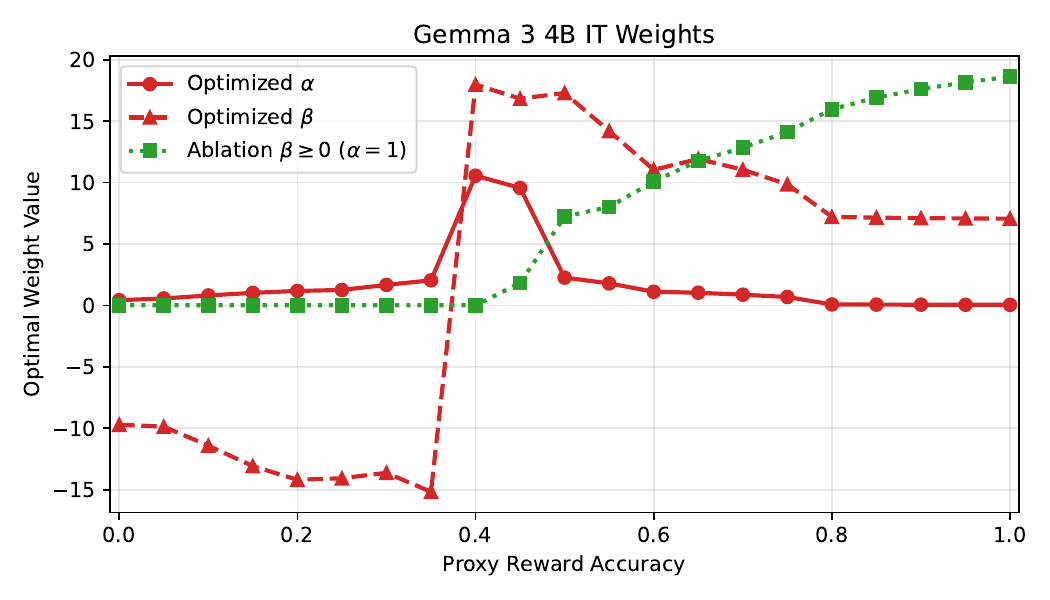}
    \end{subfigure}
    \hfill
    \begin{subfigure}{0.48\linewidth}
        \centering
        \includegraphics[width=\linewidth]{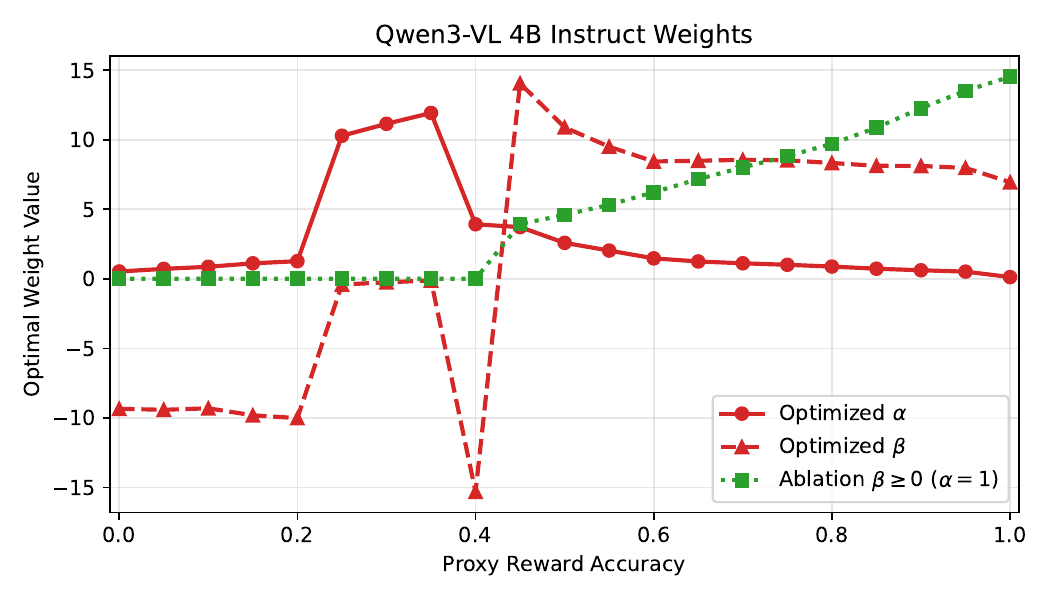}
    \end{subfigure}

    \caption{Optimized (two-expert) SLOP weights for SQA.}
    \label{fig:SQA_more_params}
\end{figure}

\section{Visual question answering experiment details and additional results}
\label{app:more_SQA}

Figures~\ref{fig:sqa_gemma} and~\ref{fig:sqa_qwen} plot additional performance results for 
Gemma-3-4B~\citep{team2025gemma} and Qwen3-VL-4B~\citep{bai2025qwen3}, while Figure~\ref{fig:SQA_more_params} plots their corresponding optimized SLOP weights.

The VLM is prompted with image and question text pairs from the SQA dataset, preceded by the following system prompt:

\verb|    A chat between a curious user and an artificial intelligence assistant.|
\verb|    The assistant gives helpful answers to the user's multiple-choice|
\verb|    questions, responding with the letter of the correct choice only.|

Weight optimization is performed for $T=500$ steps, with learning rate $\eta = 0.05$ and weight decay parameter $\lambda = 0$, and by using Adam~\citep{kingma2014adam}, with $\beta_1 = 0.9$ and $\beta_2 = 0.999$.

For this experiment, we restrict sampling to only decoding a single-token over the small subset of possible answers, which allows for exactly sampling the corresponding SLOP (instead of approximate sampling via SBoN).
This simplifies the calibration objective in~\eqref{eq:calibration_objective} to
\begin{align}
\widehat{J}(\alpha, \beta) =
\frac{1}{k} \sum_{i=1}^k \sum_{y \in \mathcal{Y}(x_k)}
\mathrm{softmax}\big( \alpha \log p(y \mid x_k) + \beta r(x_k, y) \big) g(x_i, y),
\end{align}
where each inner summation and $\mathrm{softmax}$ is only over $\mathcal{Y}(x_k) \subset \mathcal{Y}$, which denotes the subset of possible answers provided by each multiple-choice question $x_k$.

\begin{figure*}[ht]
\begin{center}
\includegraphics[width=\textwidth]{figs/gsm8k_slop/google_gemma-3-1b-it.pdf}
\caption{SLOP (with 4 LLMs) evaluated on GSM8K, with Gemma-3-1B as the reference model.}
\label{fig:gsm8k_gemma3}
\end{center}
\end{figure*}

\begin{figure*}[ht]
\begin{center}
\includegraphics[width=\textwidth]{figs/gsm8k_slop/Qwen_Qwen2-1.5B-Instruct.pdf}
\caption{SLOP (with 4 LLMs) evaluated on GSM8K, with Qwen2-1.5B as the reference model.}
\label{fig:gsm8k_qwen2}
\end{center}
\end{figure*}

\begin{figure*}[ht]
\begin{center}
\includegraphics[width=\textwidth]{figs/gsm8k_slop/Qwen_Qwen3-0.6B.pdf}
\caption{SLOP (with 4 LLMs) evaluated on GSM8K, with Qwen3-0.6B as the reference model.}
\label{fig:gsm8k_qwen3}
\end{center}
\end{figure*}

\begin{figure*}[ht]
\begin{center}
\includegraphics[width=\textwidth]{figs/gsm8k_slop/microsoft_Phi-3.5-mini-instruct.pdf}
\caption{SLOP (with 4 LLMs) evaluated on GSM8K, with Phi-3.5-mini as the reference model.}
\label{fig:gsm8k_phi3.5}
\end{center}
\end{figure*}

\begin{figure*}[ht]
\begin{center}
\includegraphics[width=\textwidth]{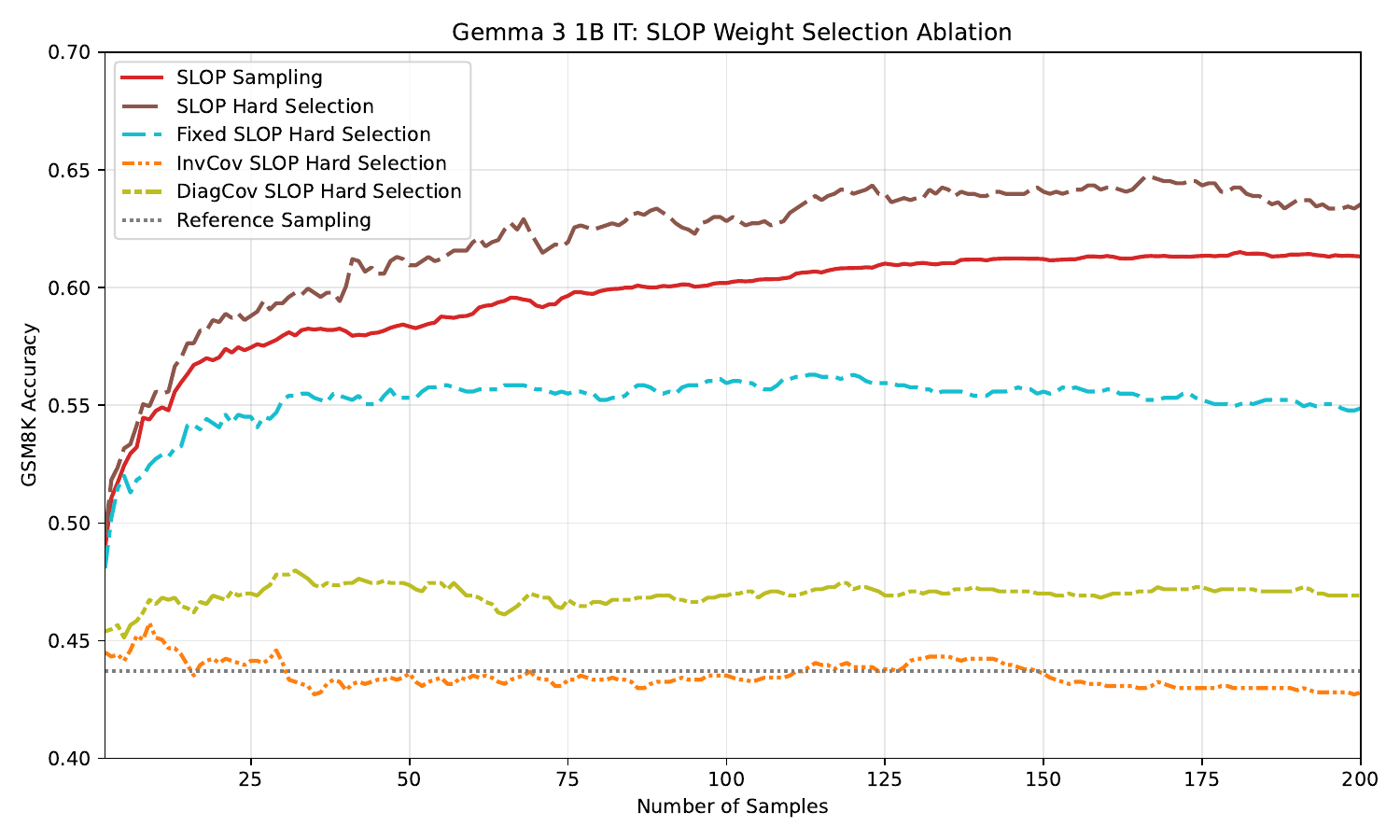}
\caption{SLOP ablations (with 4 LLMs) evaluated on GSM8K, with Gemma-3-1B as the reference model.}
\label{fig:gsm8k_gemma3_ablation}
\end{center}
\end{figure*}

\begin{figure*}[ht]
\begin{center}
\includegraphics[width=\textwidth]{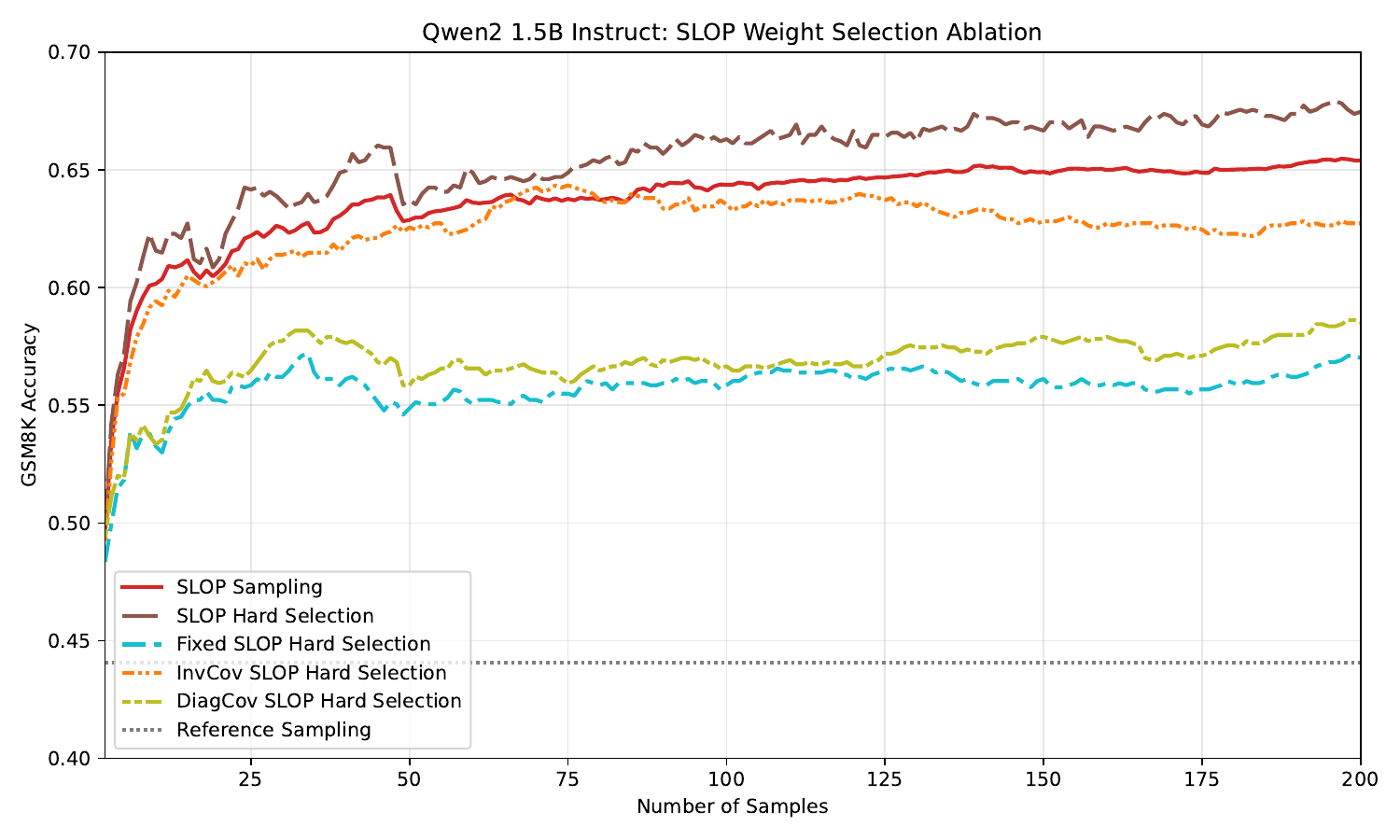}
\caption{SLOP ablations (with 4 LLMs) evaluated on GSM8K, with Qwen2-1.5B as the reference model.}
\label{fig:gsm8k_qwen2_ablation}
\end{center}
\end{figure*}

\begin{figure*}[ht]
\begin{center}
\includegraphics[width=\textwidth]{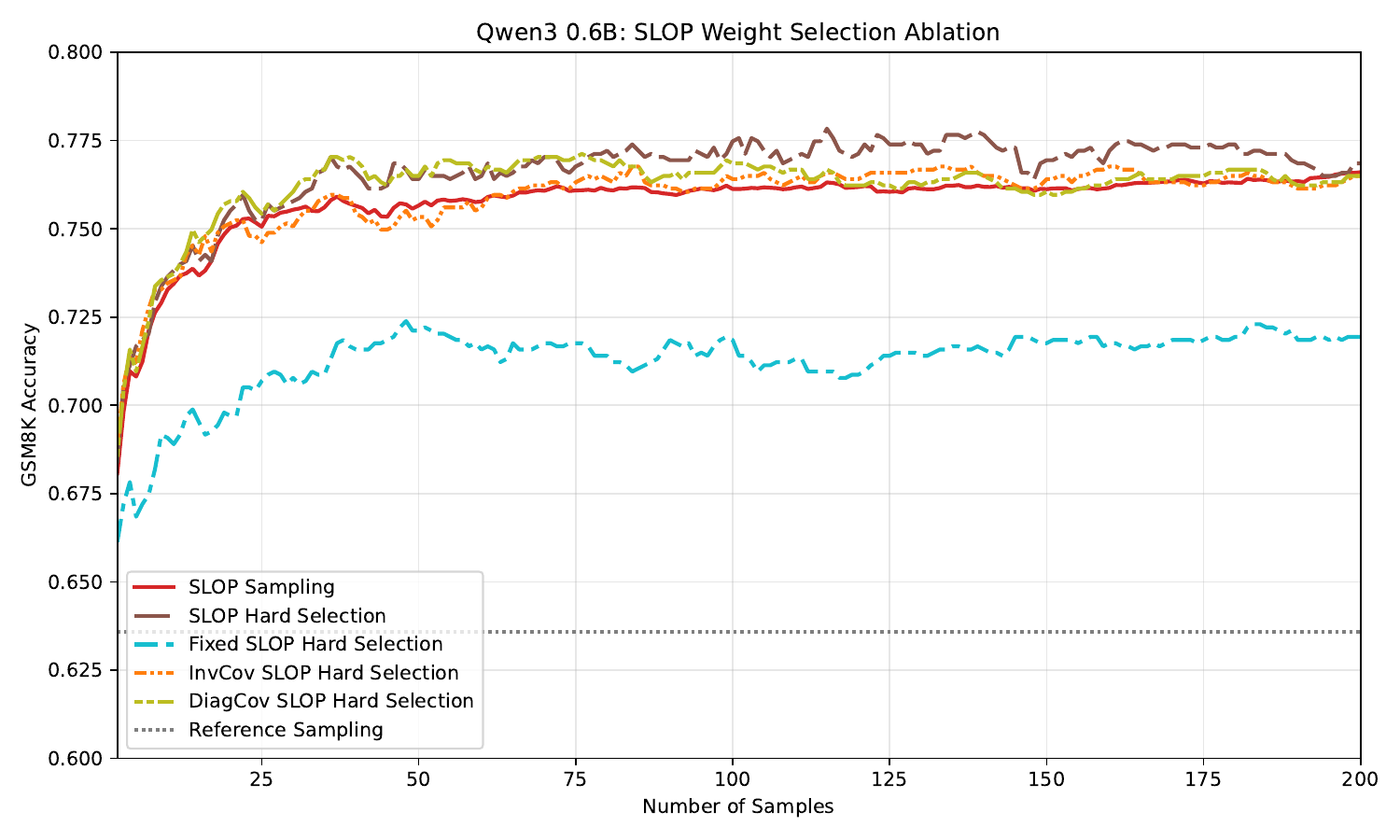}
\caption{SLOP ablations (with 4 LLMs) evaluated on GSM8K, with Qwen3-0.6B as the reference model.}
\label{fig:gsm8k_qwen3_ablation}
\end{center}
\end{figure*}

\begin{figure*}[ht]
\begin{center}
\includegraphics[width=\textwidth]{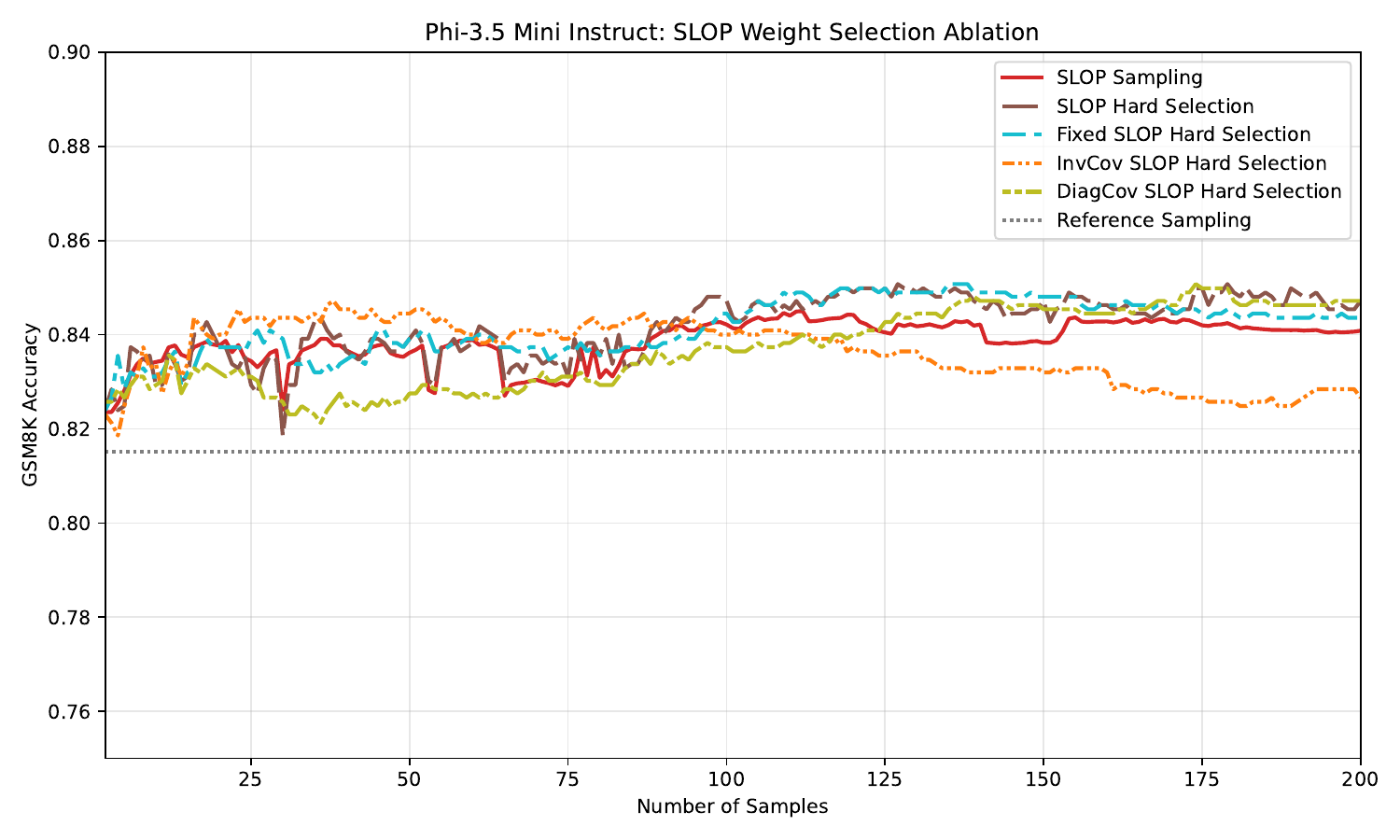}
\caption{SLOP ablations (with 4 LLMs) evaluated on GSM8K, with Phi-3.5-mini as the reference model.}
\label{fig:gsm8k_phi3.5_ablation}
\end{center}
\end{figure*}

\section{Math reasoning experiment details and additional results}
\label{app:more_gsm8k}

Responses are generated for up to $1024$ tokens, with the following system instruction, preceding each GSM8K question provided as the prompt: 

\verb|    You are a careful math tutor. Solve the user's grade-school math|
\verb|    problem, show your reasoning, and end with 'Final answer: <number>'.|

To check the correctness of a response, the answer is extracted via a regular-expression to find a number prefixed by ``Final answer:'', while falling back to extracting the last number in the response, if the prefix is not found, and then checking against the reference answer via string matching.

Weight optimization is performed for $T=500$ steps, with learning rate $\eta = 0.05$ and weight decay parameter $\lambda = 10^{-5}$, and by using Adam, with $\beta_1 = 0.9$ and $\beta_2 = 0.999$.

For convenience, Figures~\ref{fig:gsm8k_gemma3},~\ref{fig:gsm8k_qwen2},~\ref{fig:gsm8k_qwen3}, and~\ref{fig:gsm8k_phi3.5} reproduce the earlier Figure~\ref{fig:gsm8k_all}, but in larger format.

Figures~\ref{fig:gsm8k_gemma3_ablation},~\ref{fig:gsm8k_qwen2_ablation},~\ref{fig:gsm8k_qwen3_ablation}, and~\ref{fig:gsm8k_phi3.5_ablation} depict further SLOP experiment ablation results with weights chosen via the calibration score mean and covariance, or with fixed uniform weights, as described in Section~\ref{sec:gsm8k_covariance}.
For these ablations, we generally applied hard selection.

\section{Datasets and models}
\label{app:datasets_models}

For the visual question answering experiments in Section~\ref{sec:SQA_exp} and Appendix~\ref{app:more_SQA}, we used the following dataset and VLMs:
\begin{itemize}
\item
ScienceQA \citep{lu2022learn}: 
\begin{itemize}
\item License: Creative Commons Attribution Share Alike 4.0 (CC-BY-SA-4.0)
\item \url{https://huggingface.co/datasets/derek-thomas/ScienceQA}
\end{itemize}
\item
LLaVa-1.5-7B \citep{liu2023llava, liu2024improved}:
\begin{itemize}
\item License: Llama 2 Community License Agreement
\item \url{https://huggingface.co/llava-hf/llava-1.5-7b-hf}
\end{itemize}
\item
Gemma-3-4B \citep{team2025gemma}:
\begin{itemize}
\item License: Gemma Terms of Use
\item \url{https://huggingface.co/google/gemma-3-4b-it}
\end{itemize}
\item
Qwen3-VL-4B \citep{bai2025qwen3}:
\begin{itemize}
\item License: Apache License 2.0
\item \url{https://huggingface.co/Qwen/Qwen3-VL-4B-Instruct}
\end{itemize}
\end{itemize}

For the math reasoning experiments in Sections~\ref{sec:gsm8k_exp} and~\ref{sec:gsm8k_covariance}, and Appendix~\ref{app:more_gsm8k}, we used the following dataset and LLMs:
\begin{itemize}
\item
GSM8K \citep{cobbe2021training}:
\begin{itemize}
\item License: MIT License
\item \url{https://huggingface.co/datasets/openai/gsm8k}
\end{itemize}
\item
Gemma-3-1B \citep{team2025gemma}:
\begin{itemize}
\item License: Gemma Terms of Use
\item \url{https://huggingface.co/google/gemma-3-1b-it}
\end{itemize}
\item
Qwen2-1.5B \citep{yang2024qwen2}:
\begin{itemize}
\item License: Apache License 2.0
\item \url{https://huggingface.co/Qwen/Qwen2-1.5B-Instruct}
\end{itemize}
\item
Qwen3-0.6B \citep{yang2025qwen3}:
\begin{itemize}
\item License: Apache License 2.0
\item \url{https://huggingface.co/Qwen/Qwen3-0.6B}
\end{itemize}
\item
Phi-3.5-mini \citep{abdin2024phi3}:
\begin{itemize}
\item License: MIT License
\item \url{https://huggingface.co/microsoft/Phi-3.5-mini-instruct} 
\end{itemize}
\end{itemize}

\section{Computational resource usage}
\label{app:compute}

All experiments were run on a single desktop workstation, equipped with an Intel Core i7-13700K CPU, 128 GB of system RAM, and an Nvidia RTX 4090 GPU with 24 GB of VRAM.
The storage footprint for all models, datasets, and results generated by our experiments is approximately 48 GB.

In total, our experiments used approximately $150$ to $200$ hours of compute on this machine.
The vast majority of this time was used by the math reasoning experiments, discussed in Sections~\ref{sec:gsm8k_exp} and~\ref{sec:gsm8k_covariance}, and Appendix~\ref{app:more_gsm8k}, specifically response generation and scoring.
The visual question answering experiments of Section~\ref{sec:SQA_exp} and Appendix~\ref{app:more_SQA} consisted of no more than $3$ hours of the total compute, since our approach for these multiple-choice required only computing single next-token likelihoods.

\end{document}